\numberwithin{equation}{section}
\theoremstyle{plain} 
\newtheorem{theorem}{Theorem}[section]
\newtheorem{lemma}[theorem]{Lemma}
\newtheorem{claim}{Claim}
\newcommand{\proja}{\bm{\mathcal{P}_A}}
\newcommand{\projw}{\bm{\mathcal{P}_W}}
\newcommand{\mf}{{\mathcal{F}}}
\newcommand{\ts}{\bm{\tilde{S}}}
\newcommand{\estimator}{\bm{\hat{X}_{\tau_n}^*}}
\newcommand{\mlzero}{{M_{\lambda_0}}}
\newcommand{\mlone}{{M_{\lambda_1}}}
\newcommand{\mi}{\bm{\mathcal{I}}}
\newcommand{\mc}{{\mathcal{C}}}
\newcommand{\lossfunc}{{\mathcal{L}}}
\newcommand{\A}{{\bm{A}}}
\newcommand{\B}{{\bm{B}}}
\newcommand{\D}{{\bm{D}}}
\newcommand{\ee}{{\bm{e}}}
\newcommand{\G}{{\bm{G}}}
\newcommand{\eicH}{{\bm{H}}}
\newcommand{\I}{{\bm{I}}}
\newcommand{\Laplacian}{{\bm{L}}}
\newcommand{\Oh}{{\bm{O}}}
\newcommand{\Q}{{\bm{Q}}}
\newcommand{\eS}{{\bm{S}}}
\newcommand{\es}{{\bm{s}}}
\newcommand{\T}{{\bm{T}}}
\newcommand{\U}{{\bm{U}}}
\newcommand{\V}{{\bm{V}}}
\newcommand{\W}{{\bm{W}}}
\newcommand{\X}{{\bm{X}}}
\newcommand{\x}{{\bm{x}}}
\newcommand{\Y}{{\bm{Y}}}
\newcommand{\Z}{{\bm{Z}}}
\newcommand{\z}{{\bm{z}}}
\newcommand{\BoldTheta}{{\bm{\Theta}}}
\newcommand{\BoldGamma}{{\bm{\Gamma}}}
\DeclareMathOperator{\diag}{diag}
\def\twoImages#1#2#3#4#5#6 
\begin{document}


\title{Detecting Overlapping Communities in Networks Using Spectral Methods}

\author{Yuan Zhang, Elizaveta Levina and Ji Zhu}
\date{}
\maketitle

\begin{abstract}
Community detection is a fundamental problem in network analysis.  In practice, communities often overlap, which makes the problem more challenging.
Here we propose a general, flexible, and interpretable generative model for overlapping communities, which can be viewed as generalizing several previous models in different ways.     We develop an efficient spectral algorithm for estimating the community memberships, which deals with the overlaps by employing the $K$-medians algorithm rather than the usual $K$-means for clustering in the spectral domain.  We show that the algorithm is asymptotically consistent when networks are not too sparse and the overlaps between communities not too large.   Numerical experiments on both simulated networks and many real social networks demonstrate that our method performs well compared to a number of benchmark methods for overlapping community detection. 
\end{abstract}

\section{Introduction}
The problem of community detection in networks has been actively studied in several distinct fields, including physics, computer science, statistics, and the social sciences.  Its applications include understanding social interactions of people \citep{Zachary1977, Add_health_resnick1997protecting} and animals \citep{Lusseau2003}, discovering functional regulatory networks of genes \citep{bolouri2010gene, Zhang:2009:PIN:1540618} and even designing parallel computing algorithms \citep{chamberlain1998graph, hendrickson2000graph}.  Community detection is in general a challenging task.   The challenges include defining what a community is (commonly taken to be a group of nodes that have more connections to each other than to the rest of the network, although other types of communities are not unusual), formulating realistic and tractable statistical models of networks with communities, and designing fast scalable algorithms for fitting such models.

In this paper, we focus on network models with overlapping communities, with nodes potentially belonging to more than one community at a time.  This is common in real-world networks   \citep{palla2005uncovering, pizzuti2009overlapped}, and yet most literature to date has focused on partitioning the network into non-overlapping communities, with some notable exceptions discussed below.  Our goal is to design an overlapping community model that is flexible, interpretable, and computationally feasible.   We will thus focus on models which can be fitted by spectral methods, one of the most scalable tools for fitting non-overlapping community models available to date.

We start with a brief review of relevant work in community detection for non-overlapping communities, which mainly falls into one of two broad categories:  algorithmic methods, based on optimizing some criterion reflecting desirable properties of a partition over all possible partitions (see \citet{Fortunato2010} for a review), and model fitting, where a generative model with communities is postulated for the network and its parameters are estimated from the observed adjacency matrix (see \citet{Goldenberg2010} for a review).  Perhaps the most popular and best studied generative model for community detection is the stochastic block model (SBM) \citep{HolLei81, Holland83}. The SBM views the $n \times n$ network adjacency matrix $\A$, defined by $A_{ij} =1$ if there is an edge between $i$ and $j$ and 0 otherwise, as a random graph with independent Bernoulli-distributed edges.  The Bernoulli probabilities for the edges depend on the node labels $c_i$ which take values in $\{1, \dots, K\}$, and the  $K \times K$ matrix $\B$ containing the probabilities of edges forming between different communities.   The node labels can be represented by an $n \times K$  binary community membership matrix $\Z$ with exactly one ``1'' in each row, $Z_{ik} = \mathbf{1}[c_i = k]$ for all $i$, $k$.    Then the probabilities of edges are given by $\W \equiv \mathbb{E}(\A) = \Z \B \Z^T$.   Thus in this model, a node's label determines its behavior entirely, and thus all nodes in the same community are ``stochastically equivalent'', and in particular have the same expected degree.     This is known to be often violated in practice, due to commonly present ``hub'' nodes with many more connections than other nodes in their community.    The degree-corrected stochastic block model (DCSBM) \citep{Karrer10} was proposed to address this limitation, which multiplies the probability of an edge between nodes $i$ and $j$ by the product of node-specific positive ``degree parameters'' $\theta_i \theta_j$.  Both SBM and DCSBM can be consistently estimated by maximizing the likelihood \citep{Bickel&Chen2009, Zhaoetal2012}, but directly optimizing the likelihood over all label assignments is not computationally feasible.   A number of faster algorithms for fitting these models have been proposed in recent years, including pseudo-likelihood \citep{amini2013pseudo}, belief propagation \citep{Decelle.et.al.2011}, spectral approximations to the likelihood \citep{Newman2013, Le&Levina&Vershynin2014}, and generic spectral clustering \citep{von2007tutorial}, used by many and analyzed, for example, in \citet{Rohe2011, sarkar2013role}, and \citet{lei2013consistency}.   It was further shown that regularization improves on spectral clustering substantially \citep{amini2013pseudo, chaudhuri2012spectral}, and its theoretical properties have been further analyzed by \citet{qin2013regularized} and \citet{joseph2013impact}.  While for specific likelihoods one can develop methods that are both fast and more accurate than spectral clustering, such as the pseudo-likelihood \citep{amini2013pseudo}, in general spectral methods remain the most scalable option available.  

While the majority of the existing models and algorithms for community detection focus on discovering non-overlapping communities, there has been a growing interest in exploring the overlapping scenario, although both extending the existing models to the overlapping case and developing brand new models remain challenging. Like methods for non-overlapping community detection, most existing approaches for detecting overlapping communities can be categorized as either algorithmic or model-based methods. Model-based methods focus on specifying how node community memberships determine edge probabilities. For example, the overlapping stochastic block model (OSBM) \citep{latouche2009overlapping} extends the SBM by allowing the entries of the membership matrix $\Z$ to be independent Bernoulli variables, thus allowing multiple ``1''s in one row, or all ``0''s.   The mixed membership stochastic block model \citep{Airoldi2008} draws membership vectors $\Z_{i\cdot}$ from a Dirichlet prior.  The membership vector is drawn again to generate every edge, instead of being fixed for the node, so the community membership for node $i$ varies depending on which node $j$ it is interacting with.  The ``colored edges'' model \citep{Ball&Karrer&Newman2011}, sometimes referred to as the Ball-Karrer-Newman model or BKN, allows continuous community membership by relaxing the binary $\Z$ to a matrix with non-negative entries (with some normalization constraints for identifiability), and discarding the matrix $\B$.  The Bayesian nonnegative matrix factorization model \citep{psorakis2011overlapping} is related to the  model but with notable differences.

Algorithmic methods for overlapping community detection mostly rely on local greedy searches and intuitive criteria. Current approaches include detecting each community separately by maximizing a local measure of goodness of the estimated community \citep{lancichinetti2011finding} and updating an initial estimate of the community membership by neighborhood vote \citep{gregory2010finding}.   Local methods typically rely heavily on a good starting value.   Global algorithmic approaches include computing a non-negative matrix factorization approximation to the adjacency matrix and extracting a binary membership matrix from one of the factors \citep{wang2011community, gillis2012fast}.  Many heuristic methods do not take heterogeneous node degrees into account, and we found empirically they can perform poorly in the presence of hubs (see Section \ref{section_simulations}).

In this paper, we propose a new generative model for overlapping communities, the overlapping continuous community assignment model (OCCAM).   It allows a node to belong to different communities to a different extent, via the membership vector $\Z_{i\cdot}$ with non-negative entries which represent how strongly a node is associated with various communities.   We also allow arbitrary degree distributions in a manner similar to the DCSBM, and retain the $K \times K$ matrix $\B$ which allows to interpret connections between communities and compare them.  All the model parameters (membership vectors, degree corrections, and community-level connectivity) are identifiable under certain constraints which we will state explicitly.      We also develop a fast spectral algorithm to fit OCCAM.  Typically, spectral clustering projects the adjacency matrix or its Laplacian onto the $K$ leading eigenvectors representing the nodes' latent positions, and performs $K$-means in that lower-dimensional space to estimate community memberships.   Our key insight here is that when the nodes come from a mixture of clusters (as they would with multiple community memberships), $K$-means has no chance of recovering the cluster centers correctly;  but as long as there are enough pure nodes in each community, $K$-medians will still be able to identify the cluster centers correctly by ignoring the ``mixed'' nodes on the boundaries.   We show that our method produces asymptotically consistent parameter estimates as the number of nodes grows as long as there are enough pure nodes and the network is not too sparse.  We also employ a simple regularization scheme, since it is by now well known that regularizing spectral clustering substantially improves its performance, especially in sparse networks \citep{chaudhuri2012spectral, amini2013pseudo, qin2013regularized}.  We provide an explicit rate for the regularization parameter, implied by our consistency analysis, and show that the overall performance is robust to the choice of the constant multiplier in the regularization parameter as long as the rate is specified correctly.

The rest of the paper is organized as follows.  We introduce the model and discuss parameter identifiability in Section \ref{section_generative_model}, present the two-stage spectral clustering algorithm in Section \ref{section_estimation}, and state consistency results and describe the choice of the regularization parameter in Section \ref{section_consistency}.  Some simulation results are presented  in Section \ref{section_simulations}, where we investigate robustness of our method to the choice of regularization parameter and compare it to a number of benchmark methods for overlapping community detection.  We apply the proposed method to a large number of real social ego-networks (networks consisting of all friends of one or several users) from Facebook, Twitter, and GooglePlus in Section \ref{section_data_applications}. Section \ref{section_discussions} concludes the paper with a brief discussion of contributions, limitations, and future work.  All proofs are given in the supplemental materials.

\label{section_introduction}

\section{The overlapping continuous community assignment model}\label{section_generative_model}
\subsection{The  model}
Recall that we represent the network by its $n \times n$ adjacency matrix $\A$, a binary symmetric matrix with $\{A_{ij}, i < j\}$ independent Bernoulli variables and $\W \equiv \mathbb{E}(\A)$.  We will assume that $\W$ has the form 
\begin{equation}
 \W = \alpha_n {\BoldTheta} \Z \B \Z^T {\BoldTheta} \label{our_model_W} \ . 
\end{equation}
We call this formulation the Overlapping Continuous Community Assignment Model (OCCAM).   The factor $\alpha_n$ is a global scaling factor that controls the overall edge probability, and the only component that depends on $n$.   As is commonly done in the literature, for theoretical analysis we will let $\alpha_n \to 0$ at a certain rate, otherwise the network becomes completely dense as $n \to \infty$.   The $n\times n$ diagonal matrix ${\BoldTheta} =\diag(\theta_1, \ldots, \theta_n)$ contains non-negative degree correction terms that allow for heterogeneity in the node degrees, in the same fashion as under the DCSBM.   We will later assume that $\theta_i$'s are generated from a fixed distribution $\mf_{\Theta}$ which does not depend on $n$.   The $n \times K$ community membership matrix $\Z$ is the primary parameter of interest; the $i$-th row  $\Z_{i\cdot}$  represents node $i$'s propensities towards each of the $K$ communities.  We assume $Z_{ik} \geq 0$ for all $i$, $k$, and $\|\Z_{i\cdot}\|_2=1$ for identifiability.  Formally, a node is ``pure'' if $Z_{ik}=1$ for some $k$.   Later, we will also assume that the rows $\Z_{i\cdot}$'s are generated independently from a fixed distribution $\mf_{Z}$ that does not depend on $n$.
Finally,  the $K \times K$ matrix  $\B$ represents (scaled) probabilities of connections between pure nodes of all communities.   Since we are already using $\alpha_n$ and ${\BoldTheta}$, we constrain all diagonal elements of $\B$ to be 1 for identifiability.    Other constraints are also needed to make the model fully identifiable;  we will discuss them in Section \ref{section_identifiability}.

Note that the general form \eqref{our_model_W} can, with additional constraints, incorporate many of the other previously proposed models as special cases.   If all nodes are pure and $\Z$ has exactly one ``1'' in each row, we get DCSBM;  if we further assume all $\theta_i$'s are equal, we have the regular SBM.  If the constraint  $\|\Z_{i\cdot}\|_2=1$ is removed and the entries of $\Z$ are required to be 0 or 1, and all $\theta_i$'s are equal, we have the OSBM of \citet{latouche2009overlapping}.   Alternatively, if we set $\B = \I$, we have the ``colored edges'' model of \citet{Ball&Karrer&Newman2011}.    
Our model is also related to the random dot product model (RDPM) \citep{nickel2007random, young2007random}, which stipulates $\W=\X_0\X_0^T$ for some (usually low-rank) $\X_0$.   This is true for our model
if $\B$ is semi-positive definite, since then  we can uniquely define  $\X_0=\sqrt{\alpha_n}{\BoldTheta} \Z \B^{1/2}$.   OCCAM is thus more general than all of these models, and yet is fully identifiable and interpretable.

\subsection{Identifiability}\label{section_identifiability}
The parameters in \eqref{our_model_W} obviously need to be constrained to guarantee identifiability of the model.   All models with communities, including the SBM, are considered identifiable if they are identifiable up to a permutation of community labels.    To show the  interplay between the model parameters, we first state identifiability conditions treating all of $\alpha_n$, ${\BoldTheta}$, $\Z$, and $\B$ as constant parameters, and then discuss what happens if ${\BoldTheta}$ and $\Z$ are treated as random variables as we do in the asymptotic analysis.  The following conditions are sufficient for identifiability: 
\begin{enumerate}[label=I\arabic{*}]
\item  \label{condition_iden_B} $\B$ is full rank and strictly positive definite, with $B_{kk} = 1$ for all $k$.    
\item \label{condition_iden_Z} All $Z_{ik} \geq 0$, $\|\Z_{i\cdot}\|_2=1$ for all $i = 1, \dots, n$, and there is at least one ``pure'' node in every community, i.e., for each $k=1,\ldots, K$, there exists at least one $i$ such that $Z_{ik} = 1$.   
\item  \label{condition_iden_theta}  The degree parameters $\theta_1, \ldots, \theta_n$ are all positive and $n^{-1} \sum_{i=1}^n \theta_i = 1$. \end{enumerate}

\begin{theorem}\label{theorem_identifiability}
 If conditions \eqref{condition_iden_B}, \eqref{condition_iden_Z} and \eqref{condition_iden_theta} hold, the model is identifiable, i.e., if a given probability matrix $\W$ corresponds to a set of parameters $(\alpha_n, {\BoldTheta}, \Z, \B)$ through \eqref{our_model_W}, these parameters are unique up to a permutation of community labels.
\end{theorem}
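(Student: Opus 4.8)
The plan is to suppose that two parameter sets $(\alpha_n,\BoldTheta,\Z,\B)$ and $(\alpha_n',\BoldTheta',\Z',\B')$, both satisfying \ref{condition_iden_B}--\ref{condition_iden_theta}, produce the same matrix $\W$ via \eqref{our_model_W}, and to show they must coincide up to a single common permutation of the $K$ community labels. First I would pass to the square-root coordinates used in Section~\ref{section_generative_model}: with symmetric positive-definite square roots, set $\X_0=\sqrt{\alpha_n}\,\BoldTheta\Z\B^{1/2}$ and $\X_0'=\sqrt{\alpha_n'}\,\BoldTheta'\Z'{\B'}^{1/2}$, so that $\X_0\X_0^{T}=\W=\X_0'{\X_0'}^{T}$. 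By \ref{condition_iden_Z} the rows of $\Z$ include every standard basis vector of $\mathbb R^{K}$ (one from each pure node), so $\Z$ has rank $K$; since $\BoldTheta$ and $\B^{1/2}$ are invertible, $\X_0$ and $\X_0'$ are $n\times K$ matrices of full column rank $K$, so $\operatorname{rank}(\W)=K$ and $K$ itself is identified by $\W$. The standard fact that $\X_0\X_0^{T}=\X_0'{\X_0'}^{T}$ with both factors of full column rank forces $\X_0'=\X_0\Q$ for some orthogonal $\Q\in O(K)$.

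The core of the argument is to extract $\B$ from $\W$. Writing $x_i$ for the $i$-th row of $\X_0$, we have $x_i=\sqrt{\alpha_n}\,\theta_i\,\Z_{i\cdot}\B^{1/2}$ with $\sqrt{\alpha_n}\theta_i>0$, so each $x_i$ lies in the cone $\mathcal C=\{v\B^{1/2}:v\in\mathbb R^{K},\ v\ge 0\}$; conversely, because the pure nodes make the conical hull of the rows of $\Z$ equal to all of $\mathbb R^{K}_{\ge 0}$, the conical hull of $\{x_1,\dots,x_n\}$ equals $\mathcal C$ exactly. Since $\B^{1/2}$ is invertible, $\mathcal C$ is a simplicial cone, so its $K$ extreme rays are intrinsic to $\mathcal C$; they are the rays through the rows $(\B^{1/2})_{k\cdot}$, and as $\|(\B^{1/2})_{k\cdot}\|_2^{2}=B_{kk}=1$ the row $(\B^{1/2})_{k\cdot}$ is the canonical unit generator of its ray. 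Running the same argument for the primed data and using $\X_0'=\X_0\Q$, the associated cone is $\mathcal C'=\mathcal C\Q$, so its extreme rays are those of $\mathcal C$ rotated by $\Q$; matching the two unordered sets of $K$ extreme rays yields a permutation $\sigma$ with $({\B'}^{1/2})_{k\cdot}=(\B^{1/2})_{\sigma(k)\cdot}\Q$ for every $k$ (both sides being unit vectors on the same ray). Letting $\bm{P}_\sigma$ be the permutation matrix with $(\bm{P}_\sigma)_{k\ell}=\mathbf{1}[\ell=\sigma(k)]$, this reads ${\B'}^{1/2}=\bm{P}_\sigma\B^{1/2}\Q$; squaring and using that $\Q$ is orthogonal and $\B^{1/2}$ symmetric gives $\B'=\bm{P}_\sigma\B\bm{P}_\sigma^{T}$, i.e.\ $B'_{k\ell}=B_{\sigma(k)\sigma(\ell)}$, which is $\B$ up to relabeling and is consistent with $B'_{kk}=1$.

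With $\B$ pinned down I would finish one node at a time. Combining $x_i'=\sqrt{\alpha_n'}\,\theta_i'\,\Z_{i\cdot}'{\B'}^{1/2}$ with $x_i'=x_i\Q=\sqrt{\alpha_n}\,\theta_i\,\Z_{i\cdot}\B^{1/2}\Q$, substituting ${\B'}^{1/2}=\bm{P}_\sigma\B^{1/2}\Q$, and cancelling the invertible factors $\Q$ and $\B^{1/2}$, we obtain $\sqrt{\alpha_n'}\,\theta_i'\,(\Z_{i\cdot}'\bm{P}_\sigma)=\sqrt{\alpha_n}\,\theta_i\,\Z_{i\cdot}$ for every $i$. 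Both $\Z_{i\cdot}'\bm{P}_\sigma$ and $\Z_{i\cdot}$ are unit vectors (a permutation preserves the $\ell_2$ norm, and \ref{condition_iden_Z} forces unit rows) and both prefactors are positive, so taking norms forces $\sqrt{\alpha_n'}\theta_i'=\sqrt{\alpha_n}\theta_i$, and then $\Z_{i\cdot}'\bm{P}_\sigma=\Z_{i\cdot}$ for all $i$, i.e.\ $\Z'=\Z\bm{P}_\sigma^{T}$, a relabeling of the columns of $\Z$. Averaging $\sqrt{\alpha_n'}\theta_i'=\sqrt{\alpha_n}\theta_i$ over $i=1,\dots,n$ and invoking $n^{-1}\sum_i\theta_i=n^{-1}\sum_i\theta_i'=1$ from \ref{condition_iden_theta} gives $\alpha_n'=\alpha_n$, whence $\theta_i'=\theta_i$ for every $i$, completing the argument.

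I expect the genuinely delicate point to be the cone step: verifying that the conical hull of the latent rows equals $\mathcal C$ exactly --- here the pure-node hypothesis \ref{condition_iden_Z} is essential and is not implied by the weaker requirement that the rows of $\Z$ merely span $\mathbb R^{K}$ --- and that a simplicial cone determines its extreme rays, so that the orthogonal ambiguity $\Q$ is forced to act on the canonical unit generators as a permutation. Everything after that is linear-algebra bookkeeping, and the three normalizations $B_{kk}=1$, $\|\Z_{i\cdot}\|_2=1$, and $n^{-1}\sum_i\theta_i=1$ are exactly what is needed to remove the residual scalar and diagonal degrees of freedom between the two parametrizations.
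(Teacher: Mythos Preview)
Your proof is correct and follows the same overall skeleton as the paper's: pass to the square-root factorization $\X_0=\sqrt{\alpha_n}\,\BoldTheta\Z\B^{1/2}$, use the fact that $\X_0\X_0^T=\X_0'{\X_0'}^T$ forces $\X_0'=\X_0\Q$ for some orthogonal $\Q$, identify the ``pure'' structure to pin down $\B$ up to a permutation, and then use the three normalizations $B_{kk}=1$, $\|\Z_{i\cdot}\|_2=1$, $n^{-1}\sum_i\theta_i=1$ to strip off the remaining scalar and diagonal ambiguities.

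Where you genuinely differ is in the middle step. The paper argues combinatorially: it fixes the index set $\mathcal I_k=\{i:\Z_{i\cdot}=\ee_k\}$ of nodes pure in the first parametrization and shows by contradiction that these same indices must be pure in the second parametrization (if not, one can write the offending row of $\Z_2$ as a nonnegative combination of rows outside $\mathcal I_k$, transport this through the orthogonal equivalence, and contradict purity of $\ee_k$ in $\Z_1$). It then restricts to one pure representative per community to obtain $\B_1^{1/2}\Oh_{12}=\B_2^{1/2}$ directly. Your route is geometric: you recognize that the conical hull of the latent rows is exactly the simplicial cone $\mathcal C=\mathbb R^K_{\ge 0}\B^{1/2}$ (this is precisely where the pure-node condition~\ref{condition_iden_Z} enters), note that the $K$ extreme rays of a simplicial cone are intrinsic, and conclude that $\Q$ must map the unit generators $\{(\B^{1/2})_{k\cdot}\}$ to $\{({\B'}^{1/2})_{k\cdot}\}$ up to a permutation. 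This is cleaner and makes the role of the pure nodes transparent --- they are exactly what guarantees that the cone is the full simplicial cone rather than a proper subcone --- at the cost of invoking the (elementary) uniqueness of extreme rays. The paper's argument is more hands-on and avoids the cone language, but is doing the same thing: both proofs ultimately exploit that $\ee_k$ cannot be written as a nontrivial nonnegative combination of other nonnegative unit vectors. The final bookkeeping (taking norms row by row, then averaging to separate $\alpha_n$ from $\BoldTheta$) is essentially identical in the two proofs.
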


The proof of Theorem \ref{theorem_identifiability} is given in the supplemental materials.  In general, identifiability is non-trivial to establish for most overlapping community models,  since, roughly speaking, an edge between two nodes can be explained by either their common memberships in many of the same communities, or the high probability of edges between their two different communities, a problem that does not occur in the non-overlapping case.  
Among previously proposed models,  the OSBM was shown to be identifiable \citep{latouche2009overlapping}, but their argument does not extend to our model since they only considered $\Z$ with binary entries. The identifiability of the BKN model was not discussed by  \citet{Ball&Karrer&Newman2011}, but it is relatively straightforward (though still non-trivial) to show that it is identifiable as long as there are pure nodes in each community.

While Theorem \ref{theorem_identifiability}  makes the model in \eqref{our_model_W} well defined, it is also common practice in the community detection literature to treat some of the model components as random quantities. For example,  \citet{Holland83}  treat community labels under the SBM as sampled from a multinomial distribution, and \citet{Zhaoetal2012} treat the degree parameters $\theta_i$'s in DCSBM as sampled from a general discrete distribution.   
For our consistency analysis, treating $\theta_i$'s and $\Z_{i\cdot}$'s as random significantly simplifies conditions and allows for an explicit choice of rate for the tuning parameter $\tau_n$, which will be defined in Section \ref{section_estimation}.     We will thus treat ${\BoldTheta}$ and $\Z$ as random and independent of each other for the purpose of theory, assuming that the rows of $\Z$ are independently generated from a distribution $\mf_{Z}$ on the unit sphere, and  $\theta_i$'s are i.i.d.\ from a distribution $\mf_\Theta$ on positive real numbers.    The conditions \ref{condition_iden_Z} and \ref{condition_iden_theta} are then replaced with the following two conditions, respectively:

\begin{enumerate}[label=RI\arabic{*}]
\setcounter{enumi}{1}
\item  \label{condition_iden_Z_r}    $\mf_Z = \pi_p\mf_p + \pi_o\mf_o$ is a mixture of a multinomial distribution $\mf_p$ on $K$ categories for pure nodes and an arbitrary distribution  $\mf_o$ on $\{\z\in\mathbb{R}^K: \z_k \geq 0, \|\z\|_2=1\}$ for nodes in the overlaps, and $\pi_p > \epsilon > 0$.
	\item \label{condition_iden_theta_r} $\mf_\Theta$ is a probability distribution on $(0,\infty)$ satisfying $\int_{0}^{\infty} t \, d\mf_\Theta(t) = 1$.
\end{enumerate}

The distribution $\mf_o$ can in principle be any distribution on the positive quadrant of the unit sphere.   For example, one could first specify that with probability $\pi_{k_1,\ldots,k_m}$, node $i$ belongs to communities $\{k_1,\ldots,k_m\}$, and then set $Z_{ik}=\frac{1}{\sqrt{m}} \mathbf{1}(k\in\{k_1,\ldots,k_m\})$.   Alternatively, one could generate values for the $m$ non-zero entries of $\Z_{i\cdot}$ from an $m$-dimensional Dirichlet distribution, and set the rest to 0.

\section{A spectral algorithm for fitting the model}\label{section_estimation}
The primary goal of fitting this model is to estimate the membership matrix $\Z$ from the observed adjacency matrix $\A$, although other parameters may also be of interest.    Since computational scalability is one of our goals, we focus on algorithms based on spectral decompositions, one of the most scalable approaches available.      Recall that spectral clustering typically works by first representing all data points (the $n$ nodes) by an $n \times K$ matrix $\X$ consisting of leading eigenvectors of a matrix derived from the data, which we call $\G$ for now, and then applying $K$-means clustering to the rows of $\X$.  For example, under the SBM,  the matrix $\G$ should be chosen to have eigenvectors $\X$ that approximate the eigenvectors $\X_0$ of $\W = \mathbb{E}(\A)$ as closely as possible, since the eigenvectors of $\W$ are piecewise constant and contain all the community information.   A naive choice $\G = \A$ is intuitively appealing, though it has been shown in practice and in theory \citep{sarkar2013role} that the graph Laplacian of $\A$, i.e., $\Laplacian=\D^{-1/2}\A \D^{-1/2}$, where $\D=diag(\A\mathbbm{1})$, is a better choice, or, for sparse graphs, different regularized versions of $\Laplacian$ \citep{amini2013pseudo, chaudhuri2012spectral,  qin2013regularized, joseph2013impact}.    An additional step of normalizing the rows of $\X$ before performing $K$-means is often appropriate if the underlying model is assumed to be the degree-corrected stochastic blockmodel \citep{qin2013regularized}. 

Regardless of the matrix chosen to estimate the eigenvectors of $\W$, the key difference between the regular SBM under which spectral clustering is usually studied and our model is that under the SBM there are only $K$ unique rows in $\X_0$, and thus $K$-means can be expected to accurately cluster the rows of $\X$, which is a noisy version of $\X_0$.    Under our model, the rows of $\X_0$  are linear combinations of the ``pure'' rows corresponding to ``centers'' of the $K$ communities.   Thus even if we could recover $\X_0$ exactly, $K$-means is not expected to work, and it is in fact straightforward to show that the $K$-means algorithm does not recover the positions of pure nodes correctly unless non-pure nodes either vanish in proportion or converge to pure nodes' latent positions as $n$ grows (proof omitted here as it is not needed for our main argument).   The key idea of our algorithm is to replace $K$-means with $K$-medians clustering:   if the proportion of pure nodes is not too low, then the latent positions of the cluster centers can still be recovered correctly, and therefore the coefficients of mixed nodes can be estimated accurately by projecting onto the pure nodes.   Other details of the algorithm involve regularization and normalization that are necessary for dealing with sparse networks and heterogeneous degrees.

Our algorithm for fitting the OCCAM takes as input the adjacency matrix $\A$ and a regularization parameter $\tau_n >0$ which we use to regularize the estimated latent node positions directly.  This is easier to handle technically than regularizing the Laplacian, and we will give an explicit rate for $\tau_n$  that guarantees asymptotic consistency in Section \ref{section_consistency}.   The algorithm proceeds as follows:

\begin{enumerate}
\item Compute $\hat{\U}_A \hat{\Laplacian}_A  \hat{\U}_A^T$, where $\hat{\Laplacian}_A$ is the $K\times K$ diagonal matrix containing the $K$ leading eigenvalues of $\A$, and $\hat{\U}_A$ is the $n\times K$ matrix containing the corresponding eigenvectors.   While the true $\W = \mathbb{E}(\A)$ is positive definite, in practice some of the eigenvalues of $\A$ may be negative;  if that happens, we truncate them to 0.  Let $\hat{\X}\equiv \hat{\U}_A  \hat{\Laplacian}_A^{1/2}$ be the estimated latent node positions.
\item  Compute  $\hat{\X}^*$, a normalized and regularized version of $\hat{\X}$, the rows of which are given by  
$\hat{\X}^*_{i\cdot}=\frac{1}{\|\hat{\X}_{i\cdot}\|_2 + \tau_n} \hat{\X}_{i\cdot}$.
\item Perform $K$-medians clustering on the rows of $\hat{\X}^*$ and obtain $K$ estimated cluster centers $\es_1, \ldots, \es_K\in \mathbb{R}^K$, i.e., 
\begin{equation}
	 \{\es_1, \ldots, \es_K\} = \arg\min_{\es_1, \ldots, \es_K} \frac{1}{n}\sum_{i=1}^{n} \min_{\es\in\{\es_1, \ldots, \es_K\}} \Big\|\hat{\X}^*_{i\cdot} - \es\Big\|_2 \label{K_medians_X_hat}
\end{equation}
Form the $K \times K$ matrix  $\hat{\eS}$ with rows equal to the estimated cluster centers $\hat{\es}_1, \ldots, \es_K$.  
\item Project the rows of $\hat{\X}^*$ onto the span of $\es_1, \ldots, \es_K$, i.e.,  compute the matrix 
$\hat{\X}^* \hat{\eS}^{-1}$
and normalize its rows to have norm 1 to obtain the estimated community membership matrix $\hat{\Z}$.  
\end{enumerate}
This algorithm can also be used to obtain other types of community assignments.  For example, to obtain binary rather than continuous community membership, we can threshold each element of $\hat{\Z}$ to obtain $\hat{Z}^0_{ik} = \mathbf{1}(\hat{Z}_{ik}>\delta_K)$ (see Section \ref{section_simulations} and Section \ref{section_data_applications}).    To obtain assignments to non-overlapping communities, we can set  $\hat{c}_i =\arg\max_{1 \le k \le K}\hat{Z}_{ik}$.

\section{Asymptotic consistency}\label{section_consistency}
\subsection{Main result}
In this section, we show consistency of our algorithm for fitting the OCCAM as the number of nodes $n$ and possibly the number of communities $K$ increase.   For the theoretical analysis, we treat $\Z$ and $\BoldTheta$ as random variables, as was done by \citet{Zhaoetal2012}. We first state  regularity conditions on the model parameters.    

\begin{enumerate}[label=A\arabic*]
\item The distribution $\mf_\Theta$  is supported on $(0, M_\theta)$, and for all $\delta>0$ satisfies  $\delta^{-1} \int_0^{\delta} d \mf_\Theta(t) \leq C_\theta$, where $M_\theta > 0$ and $C_\theta>0$ are global constants. \label{condition_theta_distribution}
\item Let $\lambda_0$ and $\lambda_1$ be the smallest and the largest eigenvalues of $\mathbb{E}[\theta_i^2\Z_{i\cdot}^T \Z_{i\cdot} \B]$, respectively. 
Then there exist global constants $M_{\lambda_0}>0$ and $M_{\lambda_1}>0$ such that $K \lambda_0 \geq \mlzero$ and $\lambda_1\leq \mlone$.  \label{condition_eigen_ZB}
\item There exists a global constant $m_B>0$ such that $\lambda_{\min}(\B)\geq m_B$. \label{condition_eigen_B}
\end{enumerate}

A key ingredient of our algorithm is the $K$-medians clustering, and consistency of $K$-medians requires its own conditions on clusters being well separated in the appropriate metric.   The \emph{sample} loss function for $K$-medians is defined by 
$$\lossfunc_n(\Q; \eS) =\frac{1}{n}\sum_{i=1}^n\min_{1\leq k \leq K}\|\Q_{i\cdot} - \eS_{k\cdot}\|_2$$
where 
$\Q\in\mathbb{R}^{n\times K}$ is a matrix whose rows $\Q_{i\cdot}$  are vectors to be clustered, and $\eS\in \mathbb{R}^{K\times K}$ is a matrix whose rows $\eS_{k\cdot}$ are cluster centers.

Assuming the rows of $\Q$ are i.i.d. random vectors sampled from a distribution ${\cal G}$, we similarly define  the \emph{population} loss function for $K$-medians by
$$\lossfunc({\cal G}; \eS) =\int \min_{1\leq k \leq K}\|\x-\eS_{k\cdot}\|_2 d{\cal G}. $$
Finally we define the Hausdorff distance, which is used here to
measure the dissimilarity between two sets of cluster centers. Specifically, for $\eS, \T\in\mathbb{R}^{K\times K}$, let $D_H(\eS, \T) = \min_\sigma\max_k \|\eS_{k\cdot} - \T_{\sigma(k)\cdot}\|_2$, where $\sigma$ ranges over all permutations of $\{1,\ldots,K\}$.

Let $\mf$ denote the distribution of $\X_{i\cdot}^*= \|\X_{i\cdot}\|_2^{-1} \X_{i\cdot}= \|\Z_{i\cdot}\B^{1/2}\|_2^{-1}  \Z_{i\cdot}\B^{1/2}$, where $\X_{i\cdot}=\theta_i \Z_{i\cdot}\B^{1/2}$. Note that each $\X_{i\cdot}^*$ is a linear combination of the rows of $\B^{1/2}$. If the distribution $\mf$ of these linear combinations puts enough probability mass on the pure nodes (rows of $\B^{1/2}$), the rows of $\B^{1/2}$ will be recovered by $K$-medians clustering and then the $\Z_{i\cdot}$'s be recovered via projection. Bearing this in mind, we assume the following condition on $\mf$ holds: 
\begin{enumerate}[label=B]
\item Let $\eS_\mf = \arg\min_{\eS}\lossfunc(\mf; \eS)$ be the global minimizer of the population $K$-medians loss function $\lossfunc(\mf; \eS)$.  Then $\eS_\mf = \B^{1/2}$ up to a row permutation. Further, there exists a global constant $M$ such that, for all $\eS$, $\lossfunc(\mf; \eS) - \lossfunc(\mf; \eS_\mf) \geq   M K^{-1}  D_H(\eS, \eS_\mf)$. \label{condition_derivative}
\end{enumerate}

Condition \ref{condition_derivative} essentially states that the population $K$-medians loss function, which is determined by $\mf$, has a unique minimum at the right place and there is  curvature around the minimum.  

\begin{theorem}[Main theorem]\label{main theorem}
Assume that the identifiability conditions \ref{condition_iden_B},  \ref{condition_iden_Z_r}, \ref{condition_iden_theta_r} and regularity conditions \ref{condition_theta_distribution}-\ref{condition_eigen_B}, 
	 \ref{condition_derivative} hold.   If  $n^{1-\alpha_0}\alpha_n\to\infty$ for some $0<\alpha_0<1$,  
$K=O(\log n)$,  and the tuning parameter is set to
	\begin{equation}
		\tau_n = C_{\tau} \frac{\alpha_n^{0.2}K^{1.5}}{n^{0.3}} \label{recommended_tau_N}
	\end{equation}
	where $C_{\tau}$ is a constant, then the estimated community membership matrix $\hat{\Z}$ is consistent in the sense that
	\begin{equation}
		\mathbb{P}\left( \frac{1}{\sqrt{n}}\|\hat{\Z}-\Z\|_F \leq C(n^{1-\alpha_0}\alpha_n)^{-\frac{1}{5}} \right) \geq 1 - P(n, \alpha_n, K)
	\end{equation}
	where $C$ is a global constant, and $P(n, \alpha_n, K)\to 0$ as $n\to \infty$.
\end{theorem}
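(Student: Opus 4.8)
The plan is to propagate errors through the four steps of the algorithm. The starting point is a spectral perturbation bound: under the sparsity condition $n^{1-\alpha_0}\alpha_n\to\infty$ and $K=O(\log n)$, a Bernstein-type concentration inequality for $\|\A-\W\|_{\mathrm{op}}$ (as in \citet{lei2013consistency}, combined with an $\varepsilon$-net / matrix Bernstein argument) gives $\|\A-\W\|_{\mathrm{op}} = O_P(\sqrt{n\alpha_n})$. Using $\W=\X_0\X_0^T$ with $\X_0=\sqrt{\alpha_n}\BoldTheta\Z\B^{1/2}$, conditions \ref{condition_eigen_ZB} and \ref{condition_eigen_B} control the eigengap of $\W$ from below by a quantity of order $n\alpha_n\lambda_0\gtrsim n\alpha_n\mlzero/K$. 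A Davis--Kahan argument then yields, up to a $K\times K$ orthogonal matrix $\Oh$, a row-wise bound $\frac{1}{\sqrt n}\|\hat\X-\X_0\Oh\|_F \lesssim \|\A-\W\|_{\mathrm{op}}/(\sqrt n\,\lambda_K(\W))$, which after plugging in is of order $K(n\alpha_n)^{-1/2}$ in the relevant scaling. I would carry out this step conditionally on $\BoldTheta$ and $\Z$, then argue the regularity conditions hold with high probability for the random draws (using \ref{condition_theta_distribution}, \ref{condition_iden_Z_r}, \ref{condition_iden_theta_r}, and a concentration argument for the empirical second-moment matrix $\frac1n\sum\theta_i^2\Z_{i\cdot}^T\Z_{i\cdot}$ around its expectation in \ref{condition_eigen_ZB}).

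The second stage is to show the normalization-plus-regularization in Step 2 is stable: the map $\x\mapsto \x/(\|\x\|_2+\tau_n)$ is Lipschitz, but small-norm rows are the danger. Here condition \ref{condition_theta_distribution} (which bounds the mass $\mf_\Theta$ puts near $0$) together with \ref{condition_eigen_B} guarantees that only a vanishing fraction of rows have $\|\X_{0,i\cdot}\|_2=\theta_i\|\Z_{i\cdot}\B^{1/2}\|_2$ smaller than the scale of $\tau_n$, so the regularization bias is controlled and $\frac{1}{\sqrt n}\|\hat\X^*-\X^*\Oh\|_F$ inherits the spectral rate plus a term of order $\tau_n$. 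The choice \eqref{recommended_tau_N} is exactly what balances the spectral error against the regularization bias, and this is where the exponent $1/5$ enters — one optimizes $(\text{spectral error})+\tau_n+(\text{something}/\tau_n)$ over $\tau_n$.

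Next I would invoke consistency of $K$-medians. The empirical rows $\hat\X^*_{i\cdot}$ are close in averaged-$\ell_2$ to i.i.d.\ samples from $\mf$ (after accounting for the rotation $\Oh$ and the perturbation just bounded), so the sample loss $\lossfunc_n(\hat\X^*;\cdot)$ is uniformly close to $\lossfunc(\mf;\cdot)$ over a compact set of candidate centers — a standard empirical-process / bracketing argument, with the $K$-dependence tracked explicitly. Combining uniform loss convergence with the curvature condition \ref{condition_derivative} ($\lossfunc(\mf;\eS)-\lossfunc(\mf;\eS_\mf)\ge MK^{-1}D_H(\eS,\eS_\mf)$) converts an excess-loss bound of order (spectral error $+\,\tau_n$) into a Hausdorff bound $D_H(\hat\eS,\B^{1/2}\Oh)\lesssim K(\text{spectral error}+\tau_n)$. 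Finally, Step 4 inverts $\hat\eS$: since $\lambda_{\min}(\B)\ge m_B$ by \ref{condition_eigen_B}, $\hat\eS$ is invertible with $\|\hat\eS^{-1}\|_{\mathrm{op}}$ bounded once $D_H$ is small, and $\hat\X^*\hat\eS^{-1}$ is then close row-wise to $\X^*(\B^{1/2}\Oh)^{-1}=\Z$ (the rotations cancel); normalizing rows to unit norm is $1$-Lipschitz away from the origin, and every true row has unit norm by \ref{condition_iden_Z_r}, so the final $\frac1{\sqrt n}\|\hat\Z-\Z\|_F$ bound follows by collecting the rates and optimizing $\tau_n$, yielding the $(n^{1-\alpha_0}\alpha_n)^{-1/5}$ exponent. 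The failure probability $P(n,\alpha_n,K)$ aggregates the spectral concentration tail, the empirical-moment concentration for \ref{condition_eigen_ZB}, and the $K$-medians uniform-convergence tail.

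The main obstacle I expect is Step 3 together with the interface to Step 4: the $K$-medians loss is only piecewise-smooth and non-convex, so translating a small \emph{excess} population loss into a small \emph{parameter} (Hausdorff) error genuinely requires condition \ref{condition_derivative}, and one must be careful that the empirical minimizer $\hat\eS$ from \eqref{K_medians_X_hat} — computed from perturbed, non-i.i.d.\ rows — still lands in the basin where that curvature bound applies, uniformly in $n$ and with the $K$-growth $K=O(\log n)$ absorbed into the rates. Controlling the small-norm rows in Step 2 (so that the regularized normalization does not blow up the error for near-boundary or low-degree nodes) is the secondary technical nuisance, handled via \ref{condition_theta_distribution}.
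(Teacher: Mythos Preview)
Your outline mirrors the paper's proof almost step for step: Lemma~\ref{lemma_1} carries out your Steps~1--2 (a Tang--Sussman--Priebe / Davis--Kahan bound combined with the concentration $\|\A-\W\|=O_P(\sqrt{n\alpha_n})$ from \citet{lei2013consistency}, plus eigenvalue control of $\W$ via Bernstein on $\frac1n\sum_i\theta_i^2\Z_{i\cdot}^T\Z_{i\cdot}\B$, then stability of the regularized normalization); Lemma~\ref{lemma_2} is your Step~3 (uniform convergence of the $K$-medians loss over a compact set via an $\epsilon$-net, followed by condition~\ref{condition_derivative} to turn excess loss into a Hausdorff bound); and the proof of Theorem~\ref{main theorem} is your Step~4 (invert $\hat{\eS}$ using $\lambda_{\min}(\B)\ge m_B$, decompose $\hat{\Y}-\Y$ into three perturbation terms, and normalize rows using the lower bound $\|\Y_{i\cdot}\|_2\ge 1/\sqrt{K}$).

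The one place your sketch would not go through as written is the bias--variance tradeoff in Step~2 that actually produces the exponent $1/5$. You describe it as ``$(\text{spectral error})+\tau_n+(\text{something}/\tau_n)$'' with the regularization bias linear in $\tau_n$; optimizing that would give a $1/2$- or $1/4$-type exponent, not $1/5$. The paper instead inserts the intermediate quantity $\X^*_{\tau_n}$ (population rows passed through the regularized normalization) and splits $\|\hat{\X}^*_{\tau_n}\Oh_{\hat X}-\X^*\|_F$ into (i) a variance piece $\|\hat{\X}^*_{\tau_n}\Oh_{\hat X}-\X^*_{\tau_n}\|_F\le \frac{2}{\tau_n}\|\hat{\X}\Oh_{\hat X}-\X\|_F$, where $2/\tau_n$ is the Lipschitz constant of $x\mapsto x/(\|x\|_2+\tau_n)$, and (ii) a bias piece with $n^{-1}\|\X^*_{\tau_n}-\X^*\|_F^2\lesssim (\tau_n/\sqrt{\alpha_n})^{2/3}$. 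The $2/3$ power is precisely where condition~\ref{condition_theta_distribution} bites: since $\theta_i$ is \emph{not} bounded away from zero, one splits $\mathbb{E}\big[\big(\tau_n/(\|\X_{i\cdot}\|_2+\tau_n)\big)^2\big]$ according to whether $\theta_i<\epsilon$ (contribution $\le C_\theta\epsilon$) or $\theta_i\ge\epsilon$ (contribution $\lesssim(\tau_n/(\sqrt{\alpha_n}\,\epsilon))^2$), and then optimizes over $\epsilon$. Balancing the resulting $K^2/(\tau_n\sqrt{n})$ against $(\tau_n/\sqrt{\alpha_n})^{2/3}$ is what forces $\tau_n\asymp \alpha_n^{0.2}K^{1.5}n^{-0.3}$ and delivers the $(n\alpha_n)^{-1/5}$ rate in Lemma~\ref{lemma_1}; a bias linear in $\tau_n$ would not.
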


Remark: The condition $n^{1-\alpha_0}\alpha_n\to \infty$ is slightly stronger than $n \alpha_n\to \infty$, which was required for weak consistency of non-overlapping community detection with fixed $K$ using likelihood or modularities by \citet{Bickel&Chen2009}, \citet{Zhaoetal2012}, and others, and which is in fact necessary under the SBM \citep{mossel2014consistency}. The rate at which $K$ is allowed to grow works out to be $K=(n\alpha_n)^{\delta}$ for a small $\delta$ (see details in the supplemental materials), which is slower than the rates of $K$ allowed in previous work that considered a growing $K$ \citep{Rohe2011, Choietal2011}.  However, these results are not really comparable since we are facing additional challenges of overlapping communities and estimating a continuous rather than a binary membership matrix.  


\subsection{Example: checking conditions}
\label{subsection_discussion_of_conditions}

The planted partition model is a widely studied special case which we use to illustrate our conditions and their interpretation.   Let $\B = (1-\rho)\I_K + \rho {\bf 1} {\bf 1}^T$, $0\leq \rho < 1$, where $\I_K$ is the $K\times K$ identity matrix and ${\bf 1}$ is a column vector of all ones.  Then $\B^{1/2}$ is a $K\times K$ matrix with diagonal entires  $K^{-1}\left( \sqrt{(K-1)\rho+1} + (K-1)\sqrt{1-\rho} \right)$ and off-diagonal entires  $K^{-1}\left( \sqrt{(K-1)\rho+1} - \sqrt{1-\rho} \right)$.    We restrict the overlap to two communities at a time and generate the rows of the community membership  matrix $\Z$ by
\begin{equation}
\Z_{i\cdot} =	\begin{cases}
 \ee_k, \ 1 \le k \le  K & \textrm{ w. prob. } \pi^{(1)} \ , \\
 \frac{1}{\sqrt{2}}(\ee_k+\ee_l), \ 1 \le k < l \le K  & \textrm{ w. prob. } \pi^{(2)} \ ,
		\end{cases}
\end{equation}
where $\ee_k$ is a row vector that contains a one in the $k$th position and zeros elsewhere, and  $K\pi^{(1)} + \frac{1}{2} K(K-1) \pi^{(2)}=1$.  We set $\theta_i \equiv 1$ for all $i$, therefore conditions \ref{condition_iden_Z_r} and \ref{condition_iden_theta_r} hold.     

For a $K \times K$  matrix of the form $(a - b)\I_K + b \mathbf{1}\mathbf{1}^T$,  $a, b > 0$, the largest eigenvalue is $a +(K-1)b$ and all other eigenvalues are $a - b$.  Thus  $\lambda_{\max}(\B) = 1+(K-1)\rho$,  $\lambda_{\min}(\B) = 1-\rho$, and conditions \ref{condition_iden_B} and \ref{condition_eigen_B} hold.    To verify condition \ref{condition_eigen_ZB},  note $\mathbb{E}[\theta_i^2\Z_{i\cdot}^T \Z_{i\cdot} \B] =  \mathbb{E}[\Z_{i\cdot}^T \Z_{i\cdot}] \B$, and since 
\begin{equation*}
\Z_{i\cdot}^T \Z_{i\cdot} = \begin{cases}
				\ee_k^T \ee_k , \ 1\leq k\leq K & \textrm{ w. prob. } \pi^{(1)}  , \\
				\frac{1}{2}(\ee_k+\ee_l)^T(\ee_k+\ee_l) , \ 1\leq k<l\leq K & \textrm{ w. prob. } \pi^{(2)} , 
                         \end{cases}
\end{equation*}
we have $\mathbb{E}[\Z_{i\cdot}^T \Z_{i\cdot}] = \left( \pi^{(1)} + \frac{K-2}{2}\pi^{(2)} \right)\I_K + \frac{\pi^{(2)}}{2} \mathbf{1}\mathbf{1}^T$. Therefore,
\begin{align*}
	\lambda_{\max}(\mathbb{E}[\Z_{i\cdot}^T \Z_{i\cdot}]) &= \pi^{(1)}+(K-1)\pi^{(2)} \leq \frac{2}{K}\\
	\lambda_{\min}(\mathbb{E}[\Z_{i\cdot}^T \Z_{i\cdot}]) &= \pi^{(1)}+\frac{K-2}{2}\pi^{(2)} \geq \frac{1}{2K}
\end{align*}
Since  $\lambda_{\max}(\mathbb{E}[\Z_{i\cdot}^T \Z_{i\cdot}] \B) \leq \lambda_{\max}(\mathbb{E}[\Z_{i\cdot}^T \Z_{i\cdot}]) \lambda_{\max}(\B)$ and $\lambda_{\min}(\mathbb{E}[\Z_{i\cdot}^T \Z_{i\cdot}] \B) \geq \lambda_{\min}(\mathbb{E}[\Z_{i\cdot}^T \Z_{i\cdot}]) \lambda_{\min}(\B)$, condition \ref{condition_eigen_ZB} holds.   

It remains to check condition
\ref{condition_derivative}. Given $\x\in\mathbb{R}^{K}$ with $\|\x\|_2=1$, for any $\eS$, let $\es(\x)$ and $\es_{\mf}(\x)$ be the best approximations to $\x$ in $\ell_2$ norm among the rows of $\eS$ and $\eS_\mf$ respectively. Then we have
\begin{align}
	\lossfunc(\mf; \eS) - \lossfunc(\mf; \B^{1/2}) &= \left\{ \pi^{(1)}D_H(\eS, \B^{1/2}) + \int_{\x\neq (\B^{1/2})_{k\cdot}, 1\leq k \leq K} \|\x-\es(\x)\|_2 d\mf \right\}\nonumber\\
	&- \left\{ \int_{\x\neq (\B^{1/2})_{k\cdot}, 1\leq k \leq K} \|\x-\es_{\mf}(\x)\|_2 d\mf \right\}\nonumber\\
	&\geq \pi^{(1)}D_H(\eS, \B^{1/2}) - \int_{\x\neq (\B^{1/2})_{k\cdot}, 1\leq k \leq K} \|\es(\x) - \es_{\mf}(\x)\|_2 d\mf \nonumber\\
	&\geq \pi^{(1)}D_H(\eS, \B^{1/2}) - \int_{\x\neq (\B^{1/2})_{k\cdot}, 1\leq k \leq K} D_H(\eS, \B^{1/2}) d\mf \nonumber\\
	&= \left(\pi^{(1)} - \frac{K(K-1)}{2}\pi^{(2)}\right)D_H(\eS, \B^{1/2})\nonumber\\
	&= \left((K+1)\pi^{(1)}-1\right)D_H(\eS, \B^{1/2}) \label{SF_condition}
\end{align}
We then see that in order for \ref{condition_derivative} to hold, i.e., for the RHS of \eqref{SF_condition} to be non-negative and equal to zero only when $D_H(\eS, \eS_\mf)=0$, we need 
\begin{equation}
\pi^{(1)} > \frac{1}{K+1}  \left(1+\frac{M}{K}\right).
\end{equation}  
This gives a precise condition on the proportion of pure nodes for this example.  In general, the proportion of pure nodes cannot always be expressed explicitly other than through condition \ref{condition_derivative}.

\section{Evaluation on synthetic networks}\label{section_simulations}
Our experiments on synthetic networks focus on two issues:  the choice of constant in the regularization parameter $\tau_n$, and comparisons of  OCCAM  to other overlapping community detection methods.  Since many other methods only output binary membership vectors, we use a performance measure based on binary overlapping membership vectors.  Following \citet{lancichinetti2009detecting}, we measure performance by an extended version of the \emph{normalized variation of information} (exNVI).   Consider two binary random vectors $\BoldGamma=(\Gamma_1,\ldots, \Gamma_K)$ and $\hat{\BoldGamma}=(\hat{\Gamma}_1, \ldots, \hat{\Gamma}_K)$, which indicate whether a node belongs to community $k$ in the true and estimated communities, respectively.   Define 
\begin{align}
 \bar{H}(\hat{\Gamma}_l|\Gamma_k) & = \frac{H(\hat{\Gamma}_l|\Gamma_k)}{H(\Gamma_k)} \ , \mbox{ where } \nonumber \\
  H(\Gamma_k) & = -\sum_z \mathbb{P}(\Gamma_k=z)\log \mathbb{P}(\Gamma_k=z), \nonumber \\ 
  H(\hat{\Gamma}_l|\Gamma_k) & =H(\Gamma_k, \hat{\Gamma}_l)-H(\Gamma_k), ~\mbox{and}~ \nonumber \\
  H(\Gamma_k, \hat{\Gamma}_l) & = -\sum_{z, \hat{z}} \mathbb{P}(\Gamma_k=z, \hat{\Gamma}_l=\hat{z})\log \mathbb{P}(\Gamma_k=z, \hat{\Gamma}_l=\hat{z}). \label{exNVI}
\end{align}
It can be seen that $\bar{H}(\hat{\Gamma}_l|\Gamma_k)$ takes values between $0$ and $1$, with 0 corresponding to $\hat{\Gamma}_l$ and $\Gamma_k$ being independent and $1$ to a perfect match. 
We then define the overall exNVI between $\BoldGamma$ and $\hat{\BoldGamma}$ to be
\begin{equation}
 \bar{H}(\BoldGamma, \hat{\BoldGamma}) = 1 - \min_\sigma \frac{1}{2 K}\sum_{k=1}^K\left[ \bar{H}(\hat{\Gamma}_{\sigma(k)}|\Gamma_k) + \bar{H}(\Gamma_k|\hat{\Gamma}_{\sigma(k)}) \right]
\end{equation}
where $\sigma$ ranges over all permutations on $\{1, \ldots, K\}$. We also define the sample versions of all the quantities in \eqref{exNVI} with probabilities replaced with frequencies, e.g., $\hat{H}(\Gamma_k) = - \sum_{z=0}^1 |\{i:\Gamma_{ik}=z\}|/n \cdot \log \left( |\{i:\Gamma_{ik}=z\}|/n \right)$, etc.

\subsection{Choice of constant for the regularization parameter}\label{subsection_choices_of_tuning_parameters}

The regularization parameter $\tau_n$ is defined by \eqref{recommended_tau_N}, up to a constant, as a function of $n$, $K$, and the unobserved $\alpha_n$.  
Absorbing a constant factor into $C_\tau$, we estimate $\alpha_n$ by 
\begin{equation}
	\hat{\alpha}_n=\frac{\sum_{i\neq j}A_{ij}}{n(n-1) K} \label{hat_alpha_n}
\end{equation}
and investigate the effect of the constant $C_\tau$ empirically.

For this simulation, we generate networks with $n = 500$ or  $2000$ nodes with $K=3$ communities. We consider two settings for $\theta_i$'s: (1) $\theta_i=1$ for all $i$ (no hubs), and (2)  $\mathbb{P}(\theta_i=1)=0.8$ and $\mathbb{P}(\theta_i=20)=0.2$ (20\% hub nodes). 
We generate $\Z$ as follows:  for $1\leq k_1< \ldots <k_m\leq K$, we assign $n \cdot \pi_{k_1\cdots k_m}$ nodes to the intersection of communities $k_1, \ldots, k_m$, and for each node $i$ in this set we set $Z_{ik}= m^{-1/2} \mathbf{1}(k \in \{k_1, \ldots, k_m\})$. 
Let $\pi_1=\pi_2=\pi_3=\pi^{(1)}$, $\pi_{12}=\pi_{13}=\pi_{23}=\pi^{(2)}$, $\pi_{123}=\pi^{(3)}$ and set $(\pi^{(1)}, \pi^{(2)}, \pi^{(3)}) = $ $(0.3, 0.03, 0.01)$.  Finally, we choose $\alpha_n$ so that the expected average node degree $\bar{d}$ is either 20 or 40.
We vary the constant factor $C_{\tau}$ in \eqref{recommended_tau_N} in the range $\{ 2^{-12}, 2^{-10}, \ldots, 2^{10}, 2^{12} \}$.   To use exNVI, we convert both the estimated $\hat{\Z}$ and $\Z$ to a binary overlapping  community assignment by thresholding its elements at $1/K$.
The results, shown in Figure \ref{figure_1}, indicate that the performance of OCCAM is stable over a  wide range of the constant factor ($2^{-12} - 2^5$), and degrades only for very large values of $C_{\tau}$. Based on this empirical evidence, we recommend setting 
\begin{equation}
	\tau_n = 0.1 \frac{\hat{\alpha}_n^{0.2}K^{1.5}}{n^{0.3}} \ . 
 \label{recommended_tau_n}
\end{equation}

\begin{figure}[htbp!]\label{figure_1}
	\centering
\twoImages{./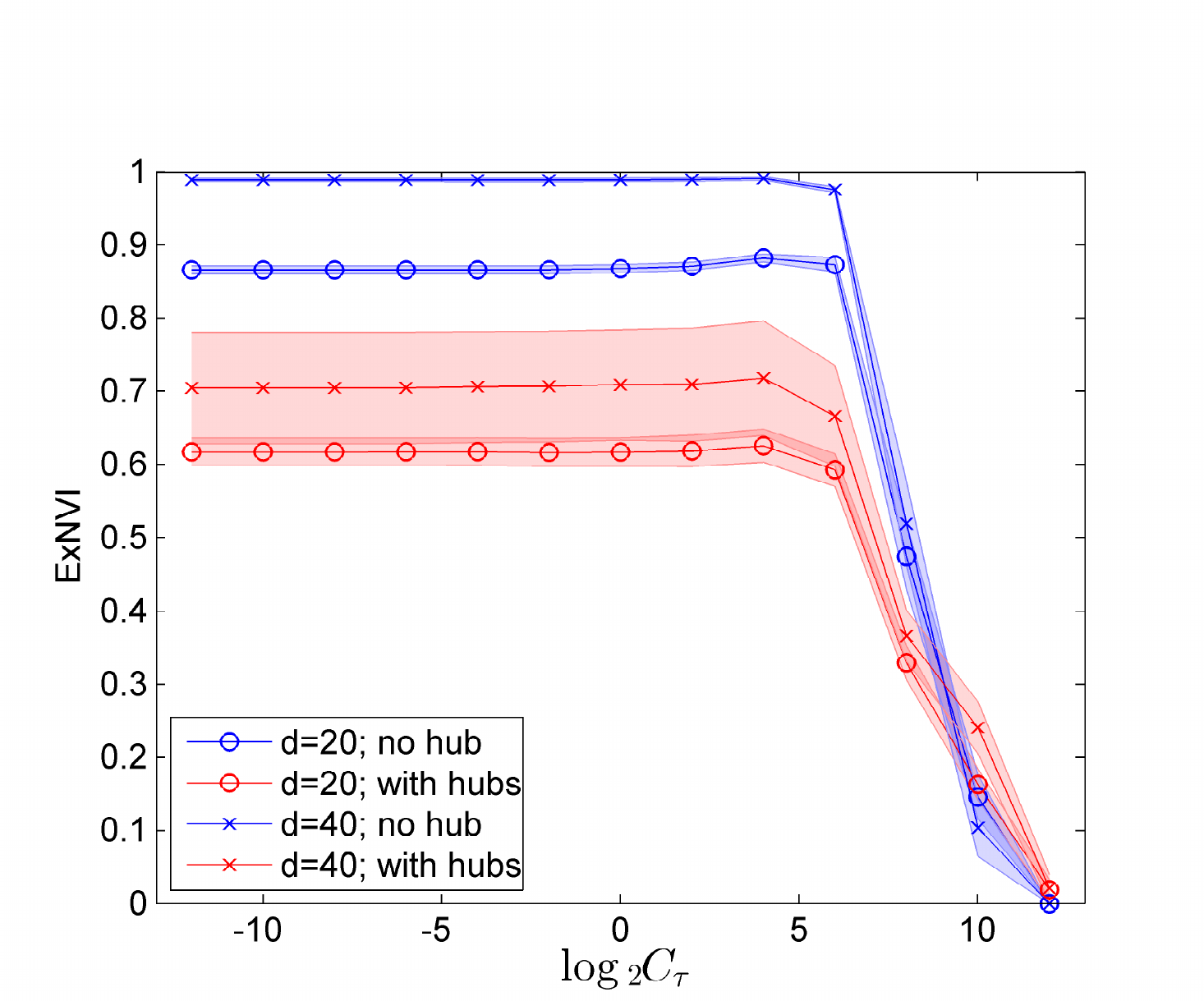}{0.4\textwidth}{(a) $\rho=0.1$, $n=500$}
		{./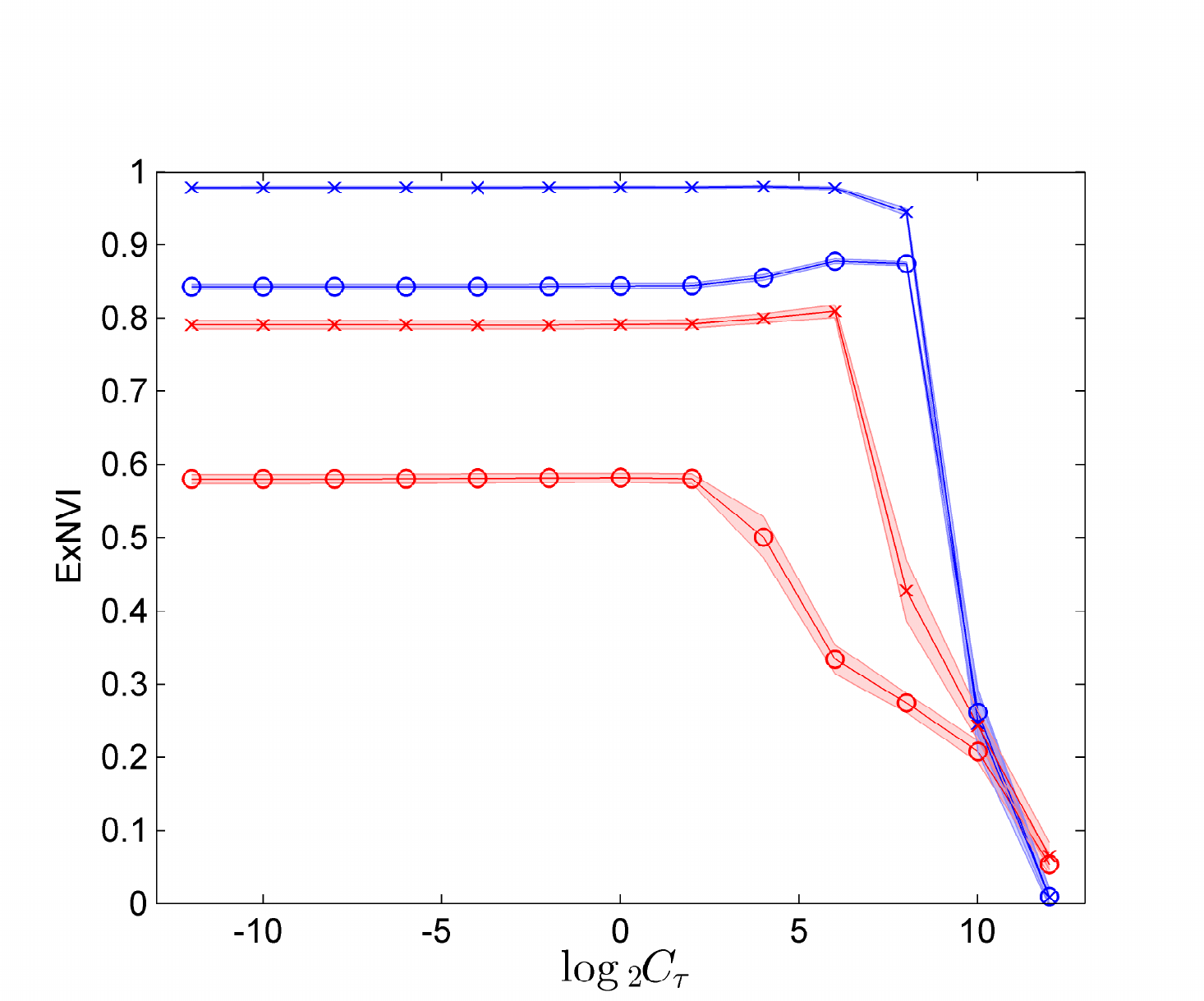}{0.4\textwidth}{(b) $\rho=0.1$, $n=2000$}
\twoImages{./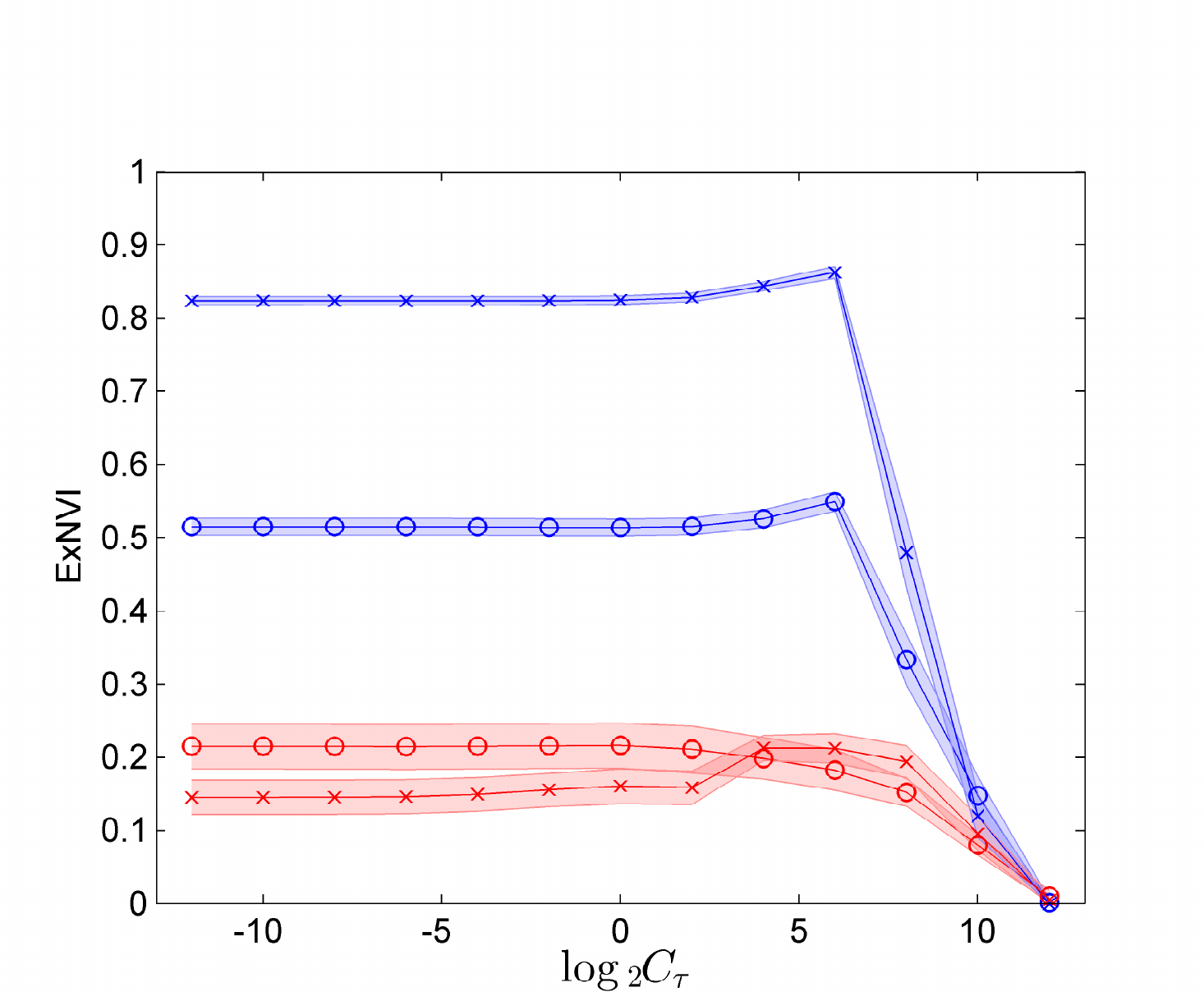}{0.4\textwidth}{(c) $\rho=0.25$, $n=500$}
		{./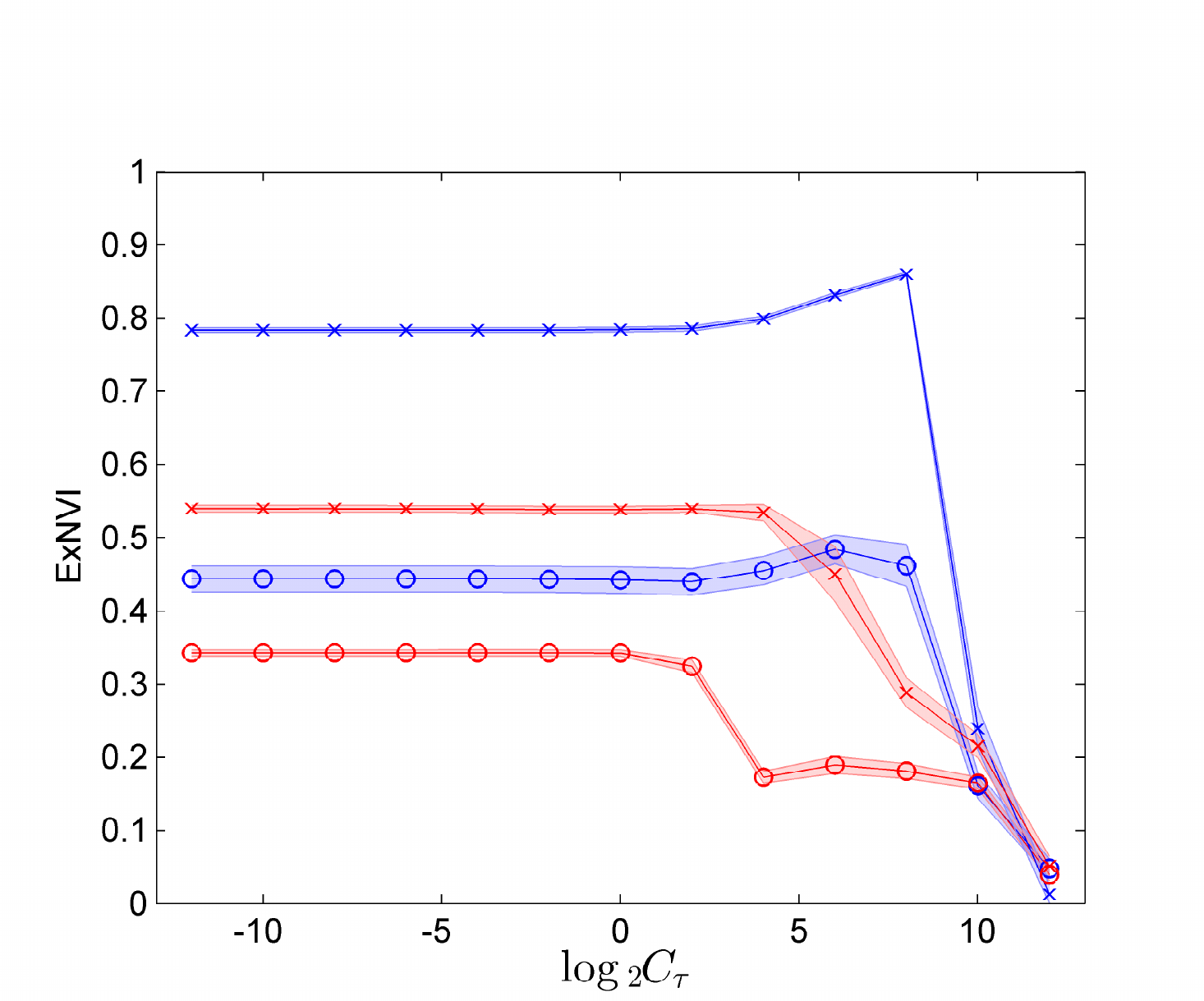}{0.4\textwidth}{(d) $\rho=0.25$, $n=2000$}
	\caption{Performance of OCCAM measured by exNVI as a function of $C_\tau$.}
\end{figure}

\subsection{Comparison to benchmark methods}
To compare OCCAM to other methods for overlapping community detection, we fix $n=500$ and use the same settings for $K$, $\Z$,  $\theta_i$'s and $\alpha_n$  as in Section \ref{subsection_choices_of_tuning_parameters}.  We set $B_{kk'} = \rho$ for $k\neq k'$, with $\rho = 0, 0.05, 0.10, \ldots, 0.5$, and set $(\pi^{(1)}, \pi^{(2)}, \pi^{(3)})$ to be either $(0.3, 0.03, 0.01)$ or $(0.25, 0.07, 0.04)$.   The regularization parameter $\tau_n$ is set to the recommended value  \eqref{recommended_tau_n}, and detection performance is measured by exNVI.



We compare OCCAM to both algorithmic methods and model-based methods that can be thought of as special cases of our model.  Algorithmic methods we compare include the order statistics local optimization method (OSLOM) by \citet{lancichinetti2011finding}, 
the community overlap propagation algorithm (COPRA) by \citet{gregory2010finding}, 
the nonnegative matrix factorization (NMF) on $\A$ computed via the algorithm of  \citet{gillis2012fast}, and the Bayesian nonnegative matrix factorization (BNMF) \citep{psorakis2011overlapping}.  Model-based methods we compare are two special cases of our model, the BKN overlapping community model  \citep{Ball&Karrer&Newman2011} and the overlapping stochastic blockmodel (OSBM) \citep{latouche2009overlapping}.  For methods that produce continuous community membership values, thresholding was applied for the purpose of comparisons.   For OCCAM and BNMF, where the membership vector is constrained to have norm 1, we use the threshold of $1/K$; for NMF, where there are no such constraints to guide the choice of threshold, we simply use a small positive number $10^{-3}$; and for BKN, we follow the scheme suggested by the authors and assign node $i$ to community $k$ if the estimated number of edges between $i$ and nodes in community $k$ is greater than $1$.  For each parameter configuration, we repeat the experiment 200 times.   Results are shown in Figure \ref{figure_2}.

As one might expect, all methods degrade as (1) the between-community edge probability approaches the within-community edge probability (i.e., $\rho$ increases); (2) the overlap between communities increases; and (3) the average node degree decreases. 
In all cases, OCCAM performs best, but we should also keep in mind that the networks were generated from the OCCAM model.  
BKN and BNMF perform well when $\rho$ is small but degrade much faster than OCCAM as $\rho$ increases, possibly because they require shared community memberships for nodes to be able to connect, thus eliminating connections between pure nodes from different communities;  NMF requires this too.   OSLOM detects communities by locally modifying initial estimates, and when $\rho$ increases beyond a certain threshold, the connections between pure nodes blur the ``boundaries'' between communities and lead OSLOM to assign all nodes to all communities. COPRA, a local voting algorithm, is highly sensitive to $\rho$ for the same reasons as OSLOM, and additionally suffers from numerical instability that sometimes prevents convergence.  OSBM performs well under the homogeneous node degree setting (when all $\theta_i=1$), where OSBM correctly specifies the data generating mechanism, but its performance degrades quickly in the presence of hubs.
Overall, in this set of simulations OCCAM has a clear advantage over its less flexible competitors.  



\begin{figure}[htbp!]\label{figure_2}
\centering
\vspace{-3em}
\twoImages{exNVI_N_500_theta_200_deg_20_gwind_1}{0.4\textwidth}{(a) A, $d=20$, with hub nodes}  {exNVI_N_500_theta_200_deg_40_gwind_1}{0.4\textwidth}{(b) A, $d=40$, with hub nodes}
\twoImages{exNVI_N_500_theta_200_deg_20_gwind_2}{0.4\textwidth}{(c) B, $d=20$, with hub nodes}  {exNVI_N_500_theta_200_deg_40_gwind_2}{0.4\textwidth}{(d) B, $d=40$, with hub nodes}
\twoImages{exNVI_N_500_theta_10_deg_20_gwind_1}{0.4\textwidth}{(e) A, $d=20$, no hub node}  {exNVI_N_500_theta_10_deg_40_gwind_1}{0.4\textwidth}{(f) A, $d=40$, no hub node}
\twoImages{exNVI_N_500_theta_10_deg_20_gwind_2}{0.4\textwidth}{(g) B, $d=20$, no hub node}  {exNVI_N_500_theta_10_deg_40_gwind_2}{0.4\textwidth}{(h) B, $d=40$, no hub node}
\vspace{-1em}
\caption{A: $(\pi^{(1)}, \pi^{(2)}, \pi^{(3)}) = (0.3, 0.03, 0.03)$; B:$(\pi^{(1)}, \pi^{(2)}, \pi^{(3)}) = (0.25, 0.07, 0.04)$}
\end{figure}
\section{Application to SNAP ego-networks}\label{section_data_applications}

The ego network datasets \citep{Jure_ego} contain more than 1000 ego-networks from Facebook, Twitter and GooglePlus. In an ego network, all the nodes are friends of one central user, and the friendship groups or circles (depending on the platform) set by this user can be used as ground truth communities.  This dataset was introduced by \citet{Jure_ego}, who also proposed an algorithm for overlapping community detection, which we will refer to as ML.  We did not include this method in simulation studies because it uses additional node features which all other algorithms under comparison do not; however, we include it in comparisons in this section.  
Before comparing the methods, we carried out some pre-processing to make sure the test cases do in fact have a substantial community structure.   First, we ``cleaned'' each network by (1) dropping nodes that are not assigned to any community; (2) dropping isolated nodes; (3) dropping communities whose pure nodes are less than 10\% of the network size. 
Note that step (3) is done iteratively, i.e., after dropping the smallest community that does not meet this criterion, we inspect all remaining communities again and continue until either all communities meet the criterion or only one community remains.  After this process is complete, we select cleaned networks that (a) contain at least 30 nodes; (b) have at least 2 communities; and (c) have Newman-Girvan modularities \citep{Newman&Girvan2004} on the true communities of no less than 0.05, indicating some assortative community structure is present.  
These three rules
eliminated 19, 45 and 28 networks respectively of the 132 GooglePlus networks,
455, 236 and 99 networks respectively out of 973 Twitter networks, and (b) eliminated 3 out of 10 Facebook networks.  The remaining 40 GooglePlus networks, 183 Twitter networks, and 7 Facebook networks were used in all comparisons,  using exNVI to measure performance. 

To get a better sense of what the different social networks look like and how different characteristics potentially affect performance, we report the following summary statistics for each network:
(1) density $\sum_{ij}A_{ij}/(n(n-1))$, i.e., the overall edge probability;
(2) average node degree $d$;
(3) the coefficient of variation of node degrees (the standard deviation divided by the mean) $\sigma_d/d$, which measures the amount of heterogeneity in the node degrees; 
(4) the proportion of overlapping nodes $r_o$;
(5) Newman-Girvan modularity.  Even though modularity was defined for non-overlapping communities, it still reflects the strength of the community structure in the networks in this dataset, which only have a modest amount of overlaps.  
We report the means and standard deviations of these measures for each of the social networks in Table \ref{table:data-summary}.
Note that Facebook and Gplus networks tend to be larger than Twitter networks, while Twitter networks tend to be denser, with more homogeneous degrees as reflected by $\sigma_d/d$, though their smaller size makes these measures less reliable.   

To compare methods, we report the average performance over each of the social platforms and the corresponding standard deviation in Table \ref{table:data-mean}.  We also report the mean pairwise difference between OCCAM and each of the other methods, along with its standard deviation in Table \ref{table:data-pairwise}.
\begin{table}[htbp!]
\begin{center} 
   \caption{Mean (SD) of summary statistics for ego-networks}
    \makebox[\linewidth]{
    \begin{tabular}{ccccccccc}\hline
     		& \#Networks	& $n$	& $K$	& Density 	& $d$	& $\sigma_d/d$	& $r_o$ 	& Modularity \bigstrut\\\hline
    Facebook 	& 7 		& 224	& 3.3	& 0.137	& 28 		& 0.644		& 0.030		& 0.418 \\
		& - 		& (221)	& (0.8)	& (0.046) 	& (29)	& (0.145)		& (0.021)	& (0.148) \\\hline
    Gplus	& 40		& 414	& 2.3	& 0.170	 & 53	& 1.035		& 0.057 	& 0.171 \\
		& - 		& (330)	& (0.5)& (0.109)  & (34)		& (0.471)		& (0.077)	& (0.109) \\\hline
    Twitter 	& 183 		& 62	& 2.8	& 0.264 & 15		& 0.595			& 0.036 	& 0.204 \\
		& - 		& (31)	& (0.9)	& (0.264) 	& (8)	& (0.148)		& (0.055)	& (0.119) \\\hline
    \end{tabular}
    }
\label{table:data-summary}
\end{center}
\end{table}   


\begin{table}[htbp!]
\begin{center}
  \caption{Mean (SD) of exNVI for all methods.}
    \makebox[\linewidth]{
    \begin{tabular}{ccccccccc}\hline
		& OCCAM		& OSLOM		& COPRA 	& NMF   	& BNMF	& BKN 		& OSBM	& ML \bigstrut\\\hline
Facebook	& \bf{0.576}	& 0.212		& 0.394 	& 0.314		& 0.500	& 0.474 	& 0.473		& 0.133	\\
		& (0.116)	& (0.068)	& (0.115)	& (0.079)	& (0.094)	& (0.107)	& (0.114)	& (0.033)	\\\hline
Gplus		& \bf{0.503}	& 0.126		& 0.114 	& 0.293		& 0.393	& 0.357 	& 0.333 	& 0.175\\
		& (0.038)	& (0.017)	& (0.036)	& (0.036)	& (0.046)	& (0.030)&  (0.039)	& (0.023)\\\hline
Twitter 	& \bf{0.451} 	& 0.208 	& 0.232 	& 0.212	& 0.437  	& 0.346 	& 0.348		& 0.200\\
		& (0.021)	& (0.012)	& (0.023)	& (0.013)	& (0.021)	& (0.017)	& (0.017)	& (0.010)\\\hline
    \end{tabular}
    }
\label{table:data-mean}
\end{center}
\end{table}

\begin{table}[htbp!]
\begin{center}
	\caption{Mean (SD) of pairwise differences in exNVI between OCCAM and other methods.}
	\begin{tabular}{cccccccc}\hline
		& vs OSLOM & vs COPRA & vs NMF & vs BNMF & vs BKN & vs OSBM & vs ML\\\hline
		Facebook & 0.363  & 0.182  & 0.261  & 0.075  & 0.101  & 0.102 & 0.443\\
				 & (0.093) & (0.082) & (0.071) & (0.072) & (0.053) & (0.032) & (0.134)\\\hline
		Gplus	 & 0.377 & 0.389 & 0.210 & 0.110 & 0.146 & 0.171 & 0.328\\
				 & (0.037) & (0.037) & (0.040) & (0.038) & (0.020) & (0.028) & (0.042)\\\hline
		Twitter  & 0.243 & 0.219 & 0.239 & 0.014 & 0.105 & 0.103 & 0.251\\
				 & (0.020) & (0.019) & (0.016) & (0.012) & (0.012) & (0.011) & (0.024)\\\hline
	\end{tabular}
\label{table:data-pairwise}	
\end{center}
\end{table}


As in simulation studies, we observe that OCCAM outperforms other methods.
Gplus networks on average have the most heterogeneous node degrees and thus are challenging for COPRA and OSBM, while OCCAM is relatively robust to node degree heterogeneity.
Further, Gplus networks tend to have higher proportions of overlapping nodes than Facebook networks; this creates difficulties for all methods.
Empirically, we also found that OSLOM and COPRA are prone to convergence to degenerate community assignments, assigning all nodes to one community.  NMF, BNMF and BKN often create substantial overlaps compared to other methods, likely because they do not allow connections between pure nodes from different communities.  
The results suggest that OCCAM works well when the overlap is not large even when modularity is relatively low, while other methods are more sensitive to modularity, which measures the strength of an assortative community structure.  On the other hand,  large overlaps between communities cause the performance of OCCAM to deteriorate, which is consistent with our theoretical results.
ML is not readily comparable to others since it uses both network information and node features when fitting the model, and one would expect it do to better since it makes use of more information;  however, using node features that are uncorrelated with the community structure can in fact worsen community detection, which may explain its poor performance on some of the networks.   


A fair comparison of computing times is difficult because the methods compared here are implemented in different languages.  Qualitatively, we can say that the most expensive part of OCCAM is the $K$-medians clustering, which involves gradient descent, and is about one order of magnitude slower than NMF.   The computational cost of OCCAM is comparable to that of BNMF, BKN and COPRA, and is at least two orders of magnitude less than that of OSLOM, OSBM and ML.

\section{Discussion}\label{section_discussions}
This paper makes two major contributions, the model and the algorithm.  The model we proposed for overlapping communities, OCCAM, is identifiable, interpretable, and flexible; it addresses limitations of several earlier approaches by allowing continuous community membership, allowing for pure nodes from different communities to be connected, and accommodating heterogeneous node degrees.  Our goals in designing an algorithm to fit the model were scalability and of course accuracy, and therefore we made a number of modifications to spectral clustering to deal with the overlaps, most importantly replacing $K$-means with $K$-medians. Empirically we found the algorithm is a lot faster than most of its competitors, and it performs well on both synthetic and real networks.    We also showed estimation consistency under conditions that articulate the appropriate setting for our method -- the overlaps are not too large and the network is not too sparse (the latter being a general condition for all community detection consistency, and the former specific to our method).  

In addition to its many advantages, our method has a number of limitations.  The upper bound on the amount of overlap is a restriction, expressed by implicit condition \ref{condition_derivative}, which may not be easy to verify except in special cases. It is clear, however, that some limit on the amount of overlap is necessary for any model to be identifiable.  Like all other spectral clustering based methods, OCCAM works best when communities have roughly similar sizes;  this is implied by condition \ref{condition_derivative} which implicitly excludes communities of size $o(n/K)$ as $n$ and $K$ grow.  Further, our model only applies to assortative communities, in other words, requires the matrix of probabilities $B$ to be positive definite. This constraint seems to be unavoidable if the model is to be identifiable.  

Like the vast majority of existing community detection methods, we assume that the number of communities $K$ is given as input to the algorithm.  There has been some very recent work on choosing $K$ by hypothesis testing \citep{Bickel&Sarkar2013hypothesis} or a BIC-type criterion \citep{Saldana.et.al2014howmany} for the non-overlapping case;  testing these methods and adapting them to the overlapping case is a topic for future work which is outside the scope of this manuscript but is an interesting topic.   Another interesting and difficult challenge is detecting communities in the presence of ``outliers'' that do not belong to any community, considered by \citet{Zhao.et.al.2011} and \citet{Cai2014}.  Our algorithm may be able to do this with additional regularization.   Finally, incorporating node features when they are available into overlapping community detection is another challenging task for future, since the features may introduce both additional useful information and additional noise.

\section{Appendix}\label{section_appendix}
\subsection{Proof of identifiability}

\begin{proof}[Proof of Theorem \ref{theorem_identifiability}]

We start with stating a Lemma of \citet{tang2013universally}: 
\begin{lemma}[Lemma A.1 of \citet{tang2013universally}]\label{lemma_Tang_et_al}
	Let $\Y_1, \Y_2\in \mathbb{R}^{n\times d}$, $d<n$, be full rank matrices and $\G_1=\Y_1\Y_1^T$, $\G_2=\Y_2\Y_2^T$. Then there exists an orthonormal $\Oh$ such that
	\begin{align}
		\|\Y_1\Oh - _2\|_F &\leq \frac{ \sqrt{d}\|\G_1 - \G_2\|(\sqrt{\|\G_1\|} + \sqrt{\|\G_2\|}) }{\lambda_{\min}(\G_2)}
	\end{align}
	where $\lambda_{\min}(\cdot)$ i sthe smallest positive eigenvalue.
\end{lemma}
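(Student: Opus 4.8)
The plan is to treat this as a matrix perturbation statement and prove it with three standard ingredients: the rank-$d$ structure (to move between the Frobenius and operator norms), the orthogonal ambiguity of a Gram factorization (to pass to canonical representatives), and the Davis--Kahan $\sin\Theta$ theorem, which is precisely what will produce the eigengap $\lambda_{\min}(\G_2)$ in the denominator.

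First I would record the elementary facts used throughout. Since $\Y_i$ is $n\times d$ of full rank with $d<n$, the matrix $\G_i=\Y_i\Y_i^T$ is positive semidefinite of rank exactly $d$, its smallest \emph{positive} eigenvalue equals $\sigma_{\min}(\Y_i)^2$, and $\|\G_i\|=\sigma_{\max}(\Y_i)^2$, so $\sqrt{\|\G_i\|}=\|\Y_i\|$ in operator norm; this is where the factors $\sqrt{\|\G_i\|}$ will come from. Because $\Y_1\Oh-\Y_2$ has at most $d$ nonzero singular values, $\|\Y_1\Oh-\Y_2\|_F\le\sqrt{d}\,\|\Y_1\Oh-\Y_2\|$, which accounts for the $\sqrt d$ factor and reduces the task to an operator-norm bound. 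Finally, a full-rank factorization of a PSD matrix is unique up to a right orthogonal factor, so replacing $\Y_i$ by $\Y_i\bm{R}_i$ with $\bm{R}_i$ orthogonal changes neither $\G_i$ nor $\min_{\Oh}\|\Y_1\Oh-\Y_2\|$ (relabel $\Oh\mapsto\bm{R}_1\Oh\bm{R}_2^T$). Hence I may assume the canonical representatives $\Y_i=\U_i\Lambda_i^{1/2}$, where $\G_i=\U_i\Lambda_i\U_i^T$ is the reduced eigendecomposition, $\U_i\in\mathbb{R}^{n\times d}$ has orthonormal columns, and $\Lambda_i$ is the diagonal matrix of positive eigenvalues.

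I would then take $\Oh$ to be the orthogonal factor in the polar decomposition of $\U_1^T\U_2$, i.e.\ the Procrustes rotation aligning the two eigenbases, and split the error as
$$\Y_1\Oh-\Y_2=(\U_1\Oh-\U_2)\Lambda_2^{1/2}+\U_1\big(\Lambda_1^{1/2}\Oh-\Oh\Lambda_2^{1/2}\big).$$
The first (subspace) term has norm at most $\|\U_1\Oh-\U_2\|\,\sqrt{\|\G_2\|}$, and Davis--Kahan bounds $\|\U_1\Oh-\U_2\|$ by a constant times $\|\G_1-\G_2\|/\lambda_{\min}(\G_2)$, since the top $d$-dimensional eigenspace of $\G_2$ is separated from the rest of its spectrum (all zeros) by exactly $\lambda_{\min}(\G_2)$; this contributes $\sqrt{\|\G_2\|}\,\|\G_1-\G_2\|/\lambda_{\min}(\G_2)$. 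The second (magnitude) term equals $\|\Lambda_1^{1/2}-\Oh\Lambda_2^{1/2}\Oh^T\|$, using that $\U_1$ is an isometry and $\Oh\Lambda_2^{1/2}\Oh^T=(\Oh\Lambda_2\Oh^T)^{1/2}$; I would bound it via the operator-Lipschitz property of the matrix square root on matrices bounded below, combined with Weyl's inequality and the same subspace estimate to control $\|\Lambda_1-\Oh\Lambda_2\Oh^T\|$, and check it is dominated by $\sqrt{\|\G_1\|}\,\|\G_1-\G_2\|/\lambda_{\min}(\G_2)$. Adding the two contributions and reinstating the $\sqrt d$ factor yields the stated inequality.

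I expect the magnitude term to be the main obstacle. Unlike the subspace term it mixes an eigenvalue perturbation with the eigenbasis rotation: $\Lambda_1-\Oh\Lambda_2\Oh^T$ is not directly an eigenvalue difference, so bounding it requires compressing $\G_2$ onto the range of $\U_1$ and invoking Davis--Kahan a second time before applying Weyl, and the square root loses a factor $1/\sqrt{\lambda_{\min}}$ through its Lipschitz constant. Since the statement does not track constants sharply, crude bounds suffice at this step, but the bookkeeping to verify that this term stays below $\sqrt{\|\G_1\|}\,\|\G_1-\G_2\|/\lambda_{\min}(\G_2)$ is the delicate part of the argument.
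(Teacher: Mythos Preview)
The paper does not prove this lemma; it is quoted from \citet{tang2013universally} and used as a black box, so there is no ``paper's own proof'' to compare against here. Your proposal is an independent derivation, and its architecture---reduce to canonical factors $\U_i\Lambda_i^{1/2}$ via the orthogonal ambiguity, align the eigenbases with a Procrustes rotation, then split into a subspace term handled by Davis--Kahan and a magnitude term handled by the operator-Lipschitz property of the square root---is the standard route to bounds of this type and will produce an inequality of the stated shape.

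One caution, and it is the gap you yourself flag: your claim that ``the statement does not track constants sharply'' is not accurate---the inequality as written has leading constant exactly $1$. The Davis--Kahan step in the form you use typically carries a factor $\sqrt{2}$ when passing from $\|\sin\Theta\|$ to $\|\U_1\Oh-\U_2\|$, and the square-root perturbation bound together with your control of $\|\Lambda_1-\Oh\Lambda_2\Oh^T\|$ (which itself requires a second application of the subspace estimate) will introduce further constants. So your argument establishes the lemma with some absolute constant $C>1$ in front, not the sharp form stated. For the present paper this is harmless---every downstream use (Claim~\ref{claim_orthonormal_transformation} is qualitative, and in Lemma~\ref{lemma_1} all constants are absorbed into $C_r$)---but strictly speaking you have proved a slightly weaker statement than the one displayed.
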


Lemma \ref{lemma_Tang_et_al} immediately implies 
\begin{claim}\label{claim_orthonormal_transformation}
	For two full rank matrices $\eicH_1$, $\eicH_2\in \mathbb{R}^{n\times K}$ satisfying $\eicH_1 \eicH_1^T = \eicH_2 \eicH_2^T$, there exists an orthonormal matrix $\Oh_H$ such that $\eicH_1\Oh_H = \eicH_2$.
\end{claim}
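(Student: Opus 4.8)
The plan is to read the claim off Lemma~\ref{lemma_Tang_et_al} by taking the two Gram matrices to be equal. Concretely, I would set $\G_1 = \eicH_1\eicH_1^T$ and $\G_2 = \eicH_2\eicH_2^T$; by hypothesis these coincide, so write $\G$ for the common value. Since $\eicH_1$ and $\eicH_2$ have full column rank and $K \le n$, the matrix $\G$ has rank $K$, hence $\lambda_{\min}(\G) > 0$ and Lemma~\ref{lemma_Tang_et_al} applies with $\Y_1 = \eicH_1$, $\Y_2 = \eicH_2$, $d = K$. It yields an orthonormal $\Oh_H$ with
\[
\|\eicH_1\Oh_H - \eicH_2\|_F \;\le\; \frac{\sqrt{K}\,\|\G_1 - \G_2\|\,(\sqrt{\|\G_1\|}+\sqrt{\|\G_2\|})}{\lambda_{\min}(\G)} \;=\; 0,
\]
because $\G_1 = \G_2$. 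Therefore $\eicH_1\Oh_H = \eicH_2$, which is exactly the assertion.

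For completeness I would also record a self-contained derivation not relying on Lemma~\ref{lemma_Tang_et_al}. Take a reduced spectral decomposition $\G = \U\Lambda\U^T$, where $\U \in \mathbb{R}^{n\times K}$ has orthonormal columns spanning $\mathrm{col}(\G)$ and $\Lambda$ is $K\times K$ diagonal with strictly positive entries. For $j \in \{1,2\}$, take a thin SVD $\eicH_j = \U_j\Sigma_j\V_j^T$; the columns of $\U_j$ span $\mathrm{col}(\eicH_j\eicH_j^T) = \mathrm{col}(\G) = \mathrm{col}(\U)$, so $\W_j := \U^T\U_j$ is $K\times K$ orthogonal with $\U_j = \U\W_j$, and comparing $\U\Lambda\U^T = \U_j\Sigma_j^2\U_j^T = \U(\W_j\Sigma_j^2\W_j^T)\U^T$ gives $\W_j\Sigma_j^2\W_j^T = \Lambda$. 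Hence $\eicH_j = \U\Lambda^{1/2}\Q_j$ with $\Q_j := \Lambda^{-1/2}\W_j\Sigma_j\V_j^T$, and $\Q_j\Q_j^T = \Lambda^{-1/2}\W_j\Sigma_j^2\W_j^T\Lambda^{-1/2} = \I$, so $\Q_j$ is orthogonal. Setting $\Oh_H := \Q_1^T\Q_2$ then gives $\eicH_1\Oh_H = \U\Lambda^{1/2}\Q_1\Q_1^T\Q_2 = \U\Lambda^{1/2}\Q_2 = \eicH_2$.

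I expect essentially no obstacle: this is the ``immediately implies'' step, so with Lemma~\ref{lemma_Tang_et_al} in hand the proof collapses to the single observation that the Lemma's error bound vanishes when the two Gram matrices agree. The only point that needs a word of care in the self-contained version is the possibility of repeated eigenvalues of $\G$, which is why the argument is phrased via the change-of-basis matrix $\W_j = \U^T\U_j$ between orthonormal bases of $\mathrm{col}(\G)$ rather than via individual eigenvectors; and the boundary case $K = n$ is immediate, since then each $\eicH_j$ is invertible and $\Oh_H := \eicH_1^{-1}\eicH_2$ is orthogonal because $\Oh_H\Oh_H^T = \eicH_1^{-1}\eicH_2\eicH_2^T(\eicH_1^T)^{-1} = \eicH_1^{-1}\G(\eicH_1^T)^{-1} = \I$.
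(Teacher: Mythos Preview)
Your primary argument is exactly the paper's: the text simply states that Lemma~\ref{lemma_Tang_et_al} ``immediately implies'' the claim, and you have correctly spelled out why---setting $\G_1=\G_2$ makes the Frobenius error bound vanish. The self-contained SVD derivation and the $K=n$ edge case are correct extras that the paper does not supply.
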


Suppose parameters $(\alpha_{n,1}, \BoldTheta_1, \Z_1, \B_1)$ and $(\alpha_{n,2}, \BoldTheta_2, \Z_2, \B_2)$ generate the same $\W$. Then by Lemma \ref{claim_orthonormal_transformation}, there exists an orthonormal matrix $\Oh_{12}$ such that
  \begin{equation}
   \alpha_{n,1} \BoldTheta_1 \Z_1 \B_1^{1/2} \Oh_{12} = \alpha_{n,2} \BoldTheta_2 \Z_2 \B_2^{1/2} \label{orthonormal_equation}
  \end{equation}
We then show that the indices for ``pure'' rows in $\Z_1$ and $\Z_2$ match up. More precisely, for $1\leq k \leq K$, let ${\cal I}_k:=\{i: row_i(\Z_1)= \ee_k\}$. We show that $row_j(\Z_2)$, $j\in\mi_k$ are also pure nodes, i.e., there exists $k'$ such that $\{j: row_j(\Z_2) = \ee_{k'}\} = \mi_k$. It suffices to show that there exists $i\in \mi_k$ such that $row_i(\Z_2)$ is pure, then the claim follows from the fact that all rows in $\Z_2$ with indices in $I_k$ equal each other, since their counterparts in $\Z_1$ are equal.    We prove this by contradiction: if $\{row_i(\Z_2), i\in \mi_k\}$ are not pure nodes, then for any $i\in\mi_k$, there exists $\{i_1, \ldots, i_K\}\subset \{1,\ldots, n\} - \mi_k$ and $\omega_1, \ldots, \omega_K\geq 0$ such that
	\begin{equation}
		row_i(\Z_2) = \sum_{k=1}^K \omega_k  row_{i_k}(\Z_2)
	\end{equation}
By (\ref{orthonormal_equation}), this yields
	\begin{equation}
		row_i(\Z_1) = \sum_{k=1}^K \omega_k \frac{\alpha_{n,1}(\BoldTheta_1)_{i_k i_k}}{\alpha_{n,2}(\BoldTheta_2)_{i_k i_k}}  row_{i_k}(\Z_1)
	\end{equation}
i.e. the $i$th row of $\Z_1$ can be expressed as a non-negative linear combination of at most $K$ rows outside $\mi_k$, and thus $row_i(\Z_1)$ is not pure. Essentially we have shown the identifiability for all pure nodes.
To show identifiability for the rest, take one pure node from each community as representative, i.e., let $\tilde{\mi}:=\{j_1,\ldots,j_K\}$, where $j_k\in \mi_k$, $1\leq k \leq K$. Let $\Z_1^K$ be the submatrix induced by concatenating rows of $\Z_1$ with indices in $\tilde{\mi}$,  similarly define $\Z_2^K$, and let $\tilde{\BoldTheta}_1$ and $\tilde{\BoldTheta}_2$ be the corresponding submatrices of $\BoldTheta_1$ and $\BoldTheta_2$. Note $\Z_1^K$ and $\Z_2^K$ are both order $K$ permutations, which is an ambiguity allowed by our definition of identifiability, so we take $\Z_1^K=\Z_2^K=I$. By \eqref{orthonormal_equation}, 
	\begin{equation}
		\alpha_{n,1}\tilde{\BoldTheta}_1 \B_1^{1/2} \Oh_{12} = \alpha_{n,2}\tilde{\BoldTheta}_2 \B_2^{1/2} \ .
\label{eq:8.5}
	\end{equation}
By  condition \ref{condition_iden_B}, both $\B_1^{1/2} \Oh_{12}$ and $\B_2^{1/2}$ have rows of norm 1, so $\alpha_{n,1}\cdot(\tilde{\BoldTheta}_1)_{kk} = \|row_k(\alpha_{n,1}\BoldTheta_1^K \B_1^{1/2} \Oh_{12})\|_2 = \|row_k(\alpha_{n,2}\BoldTheta_2^K \B_2^{1/2})\|_2 \allowbreak= \alpha_{n,1}\cdot(\tilde{\BoldTheta}_2)_{kk}$ and therefore $\alpha_{n,1}\tilde{\BoldTheta}_1 = \alpha_{n,2}\tilde{\BoldTheta}_2$.    Then from \eqref{eq:8.5} we have
	\begin{equation}
		\B_1^{1/2} \Oh_{12} = \B_2^{1/2} \label{B1_O=B2}
	\end{equation}
Thus $\B_1 = \B_1^{1/2} \Oh_{12} (\B_1^{1/2} \Oh_{12})^T = \B_2$, and \eqref{orthonormal_equation}  implies $\alpha_{n,1}\BoldTheta_1=\alpha_{n,2}\BoldTheta_2$ since all rows of $\Z_1$ and $\Z_2$ are normalized.   This in turn implies $\alpha_{n,1} = \alpha_{n,2}$ by condition \ref{condition_iden_theta} and thus $\BoldTheta_1=\BoldTheta_2$. Finally, plugging all of this back into \eqref{orthonormal_equation} we have $\Z_1=\Z_2$.
\end{proof}

\subsection{Proof of consistency}

{\bf Proof outline:} 
The proof of consistency of $\hat{\Z}$ follows the steps of the algorithm:  we first bound the difference between $\estimator$ and the row-normalized version of the true node positions $\X^*$ with high probability (Lemma \ref{lemma_1}); then bound the difference between $\hat{\eS}$ and the true community centers $\eS=\B^{1/2}$ (Lemma \ref{lemma_2}) with high probability; these combine to give a bound on the difference between $\hat{\Z}$ and $\Z$ (Theorem \ref{main theorem}).


\begin{lemma}\label{lemma_1}
	Assume conditions \ref{condition_theta_distribution}, \ref{condition_eigen_ZB} and \ref{condition_eigen_B} hold. When $\frac{\log n}{n\alpha_n}\to 0$ and $K=O(\log n)$, there exists a global constant $C_1$, such that with the choice $\tau_n=\frac{\alpha_n^{0.2}K^{1.5}}{n^{0.3}}$, for large enough $n$, we have
	\begin{align}
		\mathbb{P}\left( \frac{\|\hat{\X}_{\tau_n}^* \Oh_{\hat{X}} - \X^*\|_F}{\sqrt{n}} \leq \frac{C_1 K^{\frac{4}{5}}}{(n\alpha_n)^{\frac{1}{5}}} \right) \geq 1-P_1(n, \alpha_n, K) \label{equation_lemma_1}
	\end{align}
	where $P_1(n, \alpha_n, K)\to 0$ as $n\to\infty$, and $\Oh_{\hat{X}}$ is an orthonormal matrix depending on $\hat{X}$.
\end{lemma}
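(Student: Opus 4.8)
The plan is to follow the algorithm step by step: first control the rank-$K$ truncation $\hat{\X}=\hat{\U}_A\hat{\Laplacian}_A^{1/2}$ of $\A$ using matrix concentration together with an eigenvector perturbation bound, and then push this control through the regularized row normalization of Step~2. It is convenient to set $\X_0=\sqrt{\alpha_n}\,\BoldTheta\Z\B^{1/2}$, so that $\W=\X_0\X_0^{T}$; since the $i$th row of $\X_0$ equals $\sqrt{\alpha_n}\,\X_{i\cdot}$, the matrix $\X^*$ is at the same time the row normalization of $\X$ and of $\X_0$, and the usual orthogonal ambiguity in the factorization $\W=\X_0\X_0^{T}$ is exactly what produces the rotation $\Oh_{\hat{X}}$ in the statement.

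First I would assemble the two standard ingredients. A matrix Bernstein inequality, using that the entries of $\W$ are $O(\alpha_n)$ and $\{A_{ij}\}_{i<j}$ are independent, gives $\|\A-\W\|\le C\sqrt{n\alpha_n}$ with probability $1-o(1)$ once $\log n/(n\alpha_n)\to0$. For the spectrum of $\W$, note its nonzero eigenvalues are $\alpha_n$ times those of $\X^{T}\X=\sum_i\theta_i^{2}\,\B^{1/2}\Z_{i\cdot}^{T}\Z_{i\cdot}\B^{1/2}$, a sum of i.i.d.\ bounded positive semidefinite matrices; a law of large numbers (matrix Bernstein again, using condition~\ref{condition_theta_distribution}) shows $n^{-1}\X^{T}\X$ concentrates around $\mathbb{E}[\theta_i^{2}\Z_{i\cdot}^{T}\Z_{i\cdot}\B]$, whose eigenvalues lie in $[\lambda_0,\lambda_1]$, so condition~\ref{condition_eigen_ZB} yields $\lambda_{\min}(\W)\gtrsim n\alpha_n/K$ and $\lambda_{\max}(\W)\lesssim n\alpha_n$ with high probability. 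Because $n\alpha_n/K\gg\sqrt{n\alpha_n}$ when $K=O(\log n)$, Weyl's inequality shows the $K$ leading eigenvalues of $\A$ are positive for $n$ large (so the truncation in Step~1 is inactive), hence $\hat{\X}\hat{\X}^{T}$ is the best rank-$K$ approximation of $\A$ and $\|\hat{\X}\hat{\X}^{T}-\W\|\le 2\|\A-\W\|\lesssim\sqrt{n\alpha_n}$. Applying Lemma~\ref{lemma_Tang_et_al} with $\Y_1=\hat{\X}$ and $\Y_2=\X_0$ then produces an orthonormal $\Oh_{\hat{X}}$ with
\[
\|\hat{\X}\,\Oh_{\hat{X}}-\X_0\|_F\;\lesssim\;\frac{\sqrt{K}\,\|\hat{\X}\hat{\X}^{T}-\W\|\,\sqrt{\|\W\|}}{\lambda_{\min}(\W)}\;\lesssim\;\frac{\sqrt{K}\,\sqrt{n\alpha_n}\,\sqrt{n\alpha_n}}{n\alpha_n/K}\;\asymp\;K^{3/2}.
\]

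The hard part, which I expect to be the main obstacle, is transferring this Frobenius bound through the regularized normalization in the presence of heterogeneous degrees; rows with tiny $\theta_i$ have nearly vanishing latent norm and are essentially unrecoverable, and the regularizer $\tau_n$ exists precisely to absorb them. Since the map $u\mapsto u/(\|u\|+\tau_n)$ commutes with right multiplication by an orthonormal matrix, it suffices to compare, row by row, $u_i/(\|u_i\|+\tau_n)$ with $v_i/\|v_i\|$, where $u_i=(\hat{\X}\Oh_{\hat{X}})_{i\cdot}$ and $v_i=(\X_0)_{i\cdot}$, and for this one has the elementary bound
\[
\Bigl\|\tfrac{u_i}{\|u_i\|+\tau_n}-\tfrac{v_i}{\|v_i\|}\Bigr\|\;\le\;\frac{\tau_n}{\|u_i\|+\tau_n}\;+\;\frac{2\,\|u_i-v_i\|}{\max\{\|u_i\|,\|v_i\|\}}.
\]
Using $\|v_i\|\ge\sqrt{\alpha_n m_B}\,\theta_i$ from condition~\ref{condition_eigen_B}, I would split the rows at the threshold $\theta_i\asymp\tau_n/\sqrt{\alpha_n}$. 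For the ``unstable'' rows below the threshold, condition~\ref{condition_theta_distribution} bounds their number by $O(n\tau_n/\sqrt{\alpha_n})$ with high probability (a binomial tail estimate on an event of probability $\le C_\theta\tau_n/\sqrt{\alpha_n}$), and each contributes only a bounded amount to the squared Frobenius error. For the ``stable'' rows, $\|v_i\|$ dominates $\tau_n$, the displayed bound becomes $\lesssim\tau_n/\|v_i\|+\|u_i-v_i\|/\|v_i\|$, and summing squares splits into a regularization-bias term, $\lesssim\tau_n^{2}\alpha_n^{-1}\sum_i\theta_i^{-2}\mathbf{1}\{\theta_i\gtrsim\tau_n/\sqrt{\alpha_n}\}\lesssim n\tau_n/\sqrt{\alpha_n}$ by condition~\ref{condition_theta_distribution} (via a dyadic decomposition of $[\tau_n/\sqrt{\alpha_n},M_\theta]$), and a perturbation term, $\lesssim\tau_n^{-2}\|\hat{\X}\Oh_{\hat{X}}-\X_0\|_F^{2}\lesssim K^{3}\tau_n^{-2}$.

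Collecting, $n^{-1}\|\hat{\X}_{\tau_n}^*\Oh_{\hat{X}}-\X^*\|_F^{2}$ is bounded (up to $K$- and $\log$-factors absorbed into constants) by the sum of a degree-heterogeneity term of order $\tau_n/\sqrt{\alpha_n}$ and an eigenvector-perturbation term of order $K^{3}/(n\tau_n^{2})$; the exponents in the prescribed $\tau_n=\alpha_n^{0.2}K^{1.5}n^{-0.3}$ are calibrated to trade these off, and working out the resulting rate gives \eqref{equation_lemma_1}, while $P_1(n,\alpha_n,K)$ is the sum of the exceptional probabilities from the two concentration steps and the unstable-row count, each of which tends to $0$. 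What remains is bookkeeping: tracking the $K$-dependence through Lemma~\ref{lemma_Tang_et_al} and the tail bounds so that $K=O(\log n)$ suffices, and checking that the dyadic-sum estimate for $\sum_i\theta_i^{-2}\mathbf{1}\{\theta_i\gtrsim\tau_n/\sqrt{\alpha_n}\}$ holds with high probability (its summands being bounded by $(\tau_n/\sqrt{\alpha_n})^{-2}$, Bernstein applies).
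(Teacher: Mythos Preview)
Your outline follows the same skeleton as the paper's proof: matrix concentration for $\|\A-\W\|$, eigenvalue concentration for $\W$ through condition~\ref{condition_eigen_ZB}, Lemma~\ref{lemma_Tang_et_al} to produce the alignment $\Oh_{\hat X}$, and then a bias/perturbation split for the regularized row normalization, with condition~\ref{condition_theta_distribution} controlling the small-$\theta$ contribution. The two places where you deviate are both simplifications. First, you bound $\|\hat\X\hat\X^{T}-\W\|\le 2\|\A-\W\|$ directly from Weyl and Eckart--Young; the paper instead expands $\proja\A\proja-\projw\W\projw$ and picks up an extra factor of order $K^{3/2}$ that it then has to carry through. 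Second, you compare $\hat\X_{\tau_n}^{*}$ to $\X^{*}$ in one step with a stable/unstable split and a dyadic estimate on $\sum_i\theta_i^{-2}\mathbf{1}\{\theta_i\gtrsim\tau_n/\sqrt{\alpha_n}\}$; the paper inserts the intermediate $\X_{\tau_n}^{*}$ and bounds the bias $\|\X_{\tau_n}^{*}-\X^{*}\|_F^{2}/n$ by a single cutoff, obtaining only $(\tau_n/\sqrt{\alpha_n})^{2/3}$ rather than your sharper $\tau_n/\sqrt{\alpha_n}$.

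That sharper bias exponent is precisely where your last paragraph slips. Your two contributions to the squared error per $n$ are of order $\tau_n/\sqrt{\alpha_n}$ and $K^{3}/(n\tau_n^{2})$, and these balance at $\tau_n\asymp K\alpha_n^{1/6}n^{-1/3}$, \emph{not} at the prescribed $\tau_n=\alpha_n^{0.2}K^{1.5}n^{-0.3}$. With the prescribed $\tau_n$ your bias term dominates and yields $\|\hat\X_{\tau_n}^{*}\Oh_{\hat X}-\X^{*}\|_F/\sqrt{n}\lesssim K^{3/4}(n\alpha_n)^{-3/20}$, which is strictly larger than the $K^{4/5}(n\alpha_n)^{-1/5}$ of~\eqref{equation_lemma_1} once $n\alpha_n\gg K$. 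So the sentence ``working out the resulting rate gives~\eqref{equation_lemma_1}'' is not correct as written: the paper's $\tau_n$ is calibrated to the \emph{paper's} cruder bias bound $(\tau_n/\sqrt{\alpha_n})^{2/3}$, and if you want the lemma exactly as stated you should revert to that bound (simply use $\theta_i^{-2}\le\epsilon^{-2}$ on the stable set instead of the dyadic sum). Alternatively, keep your dyadic argument, recalibrate $\tau_n$, and state a sharper version of the lemma; everything downstream in Lemma~\ref{lemma_2} and Theorem~\ref{main theorem} only improves.
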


\begin{proof}[Proof of Lemma \ref{lemma_1}]
Define the population version of $\estimator$ as $\X_{\tau_n}^*\in\mathbb{R}^{n\times K}$, where $row_i(\X_{\tau_n}^*):=\frac{\X_{i\cdot}}{\|\X_{i\cdot}\|_2+\tau}$. We first bound $\|\estimator \Oh_{\hat{X}} - \X_{\tau_n}^*\|_F$ for a certain orthonormal matrix $\Oh_{\hat{X}}$ and then the bias term $\|\X_{\tau_n}^* - \X^*\|_F$. Then the triangular inequality gives \eqref{equation_lemma_1}.

We now bound $\|\estimator \Oh_{\hat{X}} - \X_{\tau_n}^*\|_F$. For any orthonormal matrix $\Oh$, 
\begin{align*}
	&\|row_i(\hat{\X}_{\tau_n}^*\Oh-\X_{\tau_n}^*)\|_2	= \|row_i(\hat{\X}_{\tau_n}^*)\Oh - row_i(\X_{\tau_n}^*)\|_2\\
	& = \Big\|  \frac{\hat{\X}_{i\cdot}\Oh}{\|\hat{\X}_{i\cdot}\|_2+\tau_n} - \frac{\X_{i\cdot}}{\|\X_{i\cdot}\|_2+\tau_n}  \Big\|_2 =\Big\|  \frac{\hat{\X}_{i\cdot}\Oh}{\|\hat{\X}_{i\cdot}\Oh\|_2+\tau_n} - \frac{\X_{i\cdot}}{\|\X_{i\cdot}\|_2+\tau_n}  \Big\|_2\\
	= &\frac{\| \hat{\X}_{i\cdot}\Oh(\|\X_{i\cdot}\|_2-\|\hat{\X}_{i\cdot}\Oh\|_2) + \|\hat{\X}_{i\cdot}\Oh\|_2(\hat{\X}_{i\cdot}\Oh-\X_{i\cdot}) + \tau_n(\hat{\X}_{i\cdot}\Oh - \X_{i\cdot}) \|_2}{(\|\hat{\X}_{i\cdot}\Oh\|_2+\tau_n)(\|\X_{i\cdot}\|_2+\tau_n)}\\
	\leq &\frac{(2\|\hat{\X}_{i\cdot}\Oh\|_2+\tau_n)\|\hat{\X}_{i\cdot}\Oh-\X_{i\cdot}\|_2}{(\|\hat{\X}_{i\cdot}\Oh\|_2+\tau_n)(\|\X_{i\cdot}\|_2+\tau_n)}	\leq \frac{2\|\hat{\X}_{i\cdot}\Oh - \X_{i\cdot}\|_2}{\|\X_{i\cdot}\|_2+\tau_n} \leq \frac{2\|\hat{\X}_{i\cdot}\Oh - \X_{i\cdot}\|_2}{\tau_n} \ . 
\end{align*}
Then
\begin{align*}
	\|\hat{\X}_{\tau_n}^*\Oh_{\hat{\X}} - \X_{\tau_n}^*\|_F & \leq \sqrt{ \sum_{i=1}^n\left(\frac{2}{\tau_n}\right)^2\|\hat{\X}_{i\cdot}\Oh_{\hat{\X}} - \X_{i\cdot}\|_2^2 } = \frac{2\|\hat{\X}\Oh_{\hat{\X}} - \X\|_F}{\tau_n} \ .
\end{align*}
By Lemma \ref{lemma_Tang_et_al}, there exists an orthonormal matrix $\Oh_{\hat{\X}}$, such that
	\begin{align}
	 \|\hat{\X}_{\tau_n}^*\Oh_{\hat{\X}} - \X_{\tau_n}^*\|_F &\leq \frac{2\sqrt{K}\|\hat{\X}\hat{\X}^T - \X\X^T\|\left( \sqrt{\|\hat{\X}\hat{\X}^T\|} + \sqrt{\|\X\X^T\|} \right)}{\tau_n \lambda_{\min}(\X\X^T)}\nonumber\\
	 &\leq \frac{2\sqrt{K}\|\hat{\X}\hat{\X}^T - \X\X^T\|\left( \sqrt{\|\hat{\X}\hat{\X}^T-\X\X^T\|} + 2\sqrt{\|\X\X^T\|} \right)}{\tau_n \lambda_{\min}(\X\X^T)}\nonumber\\
	 &=\frac{2\sqrt{K}\|\A-\W\|\left(\sqrt{\|\A-\W\|}+2\sqrt{\|\W\|}\right)}{\tau_n\lambda_{\min}(\W)} \label{Lemma_1_first_ineq}
	\end{align}
where $\|\cdot\|$ denotes the operator norm. We then bound each term on the RHS of (\ref{Lemma_1_first_ineq}). To bound $\|\A - \W\|$, we mostly follow \citet{tang2013universally}. Let $\U$ and $\U$ be $n\times K$ matrices of the leading $K$ eigenvectors of $\A$ and $\W$ respectively, and define $\proja:=\hat{\U}\hat{\U}^T$ and $\projw:=\U\U^T$, then $\W = \X\X^T = \projw \X\X^T \projw = \projw \W \projw$, and similarly $\A = \proja \A \proja$. We have
	\begin{align}
		\|\A-\W\| =&  \|\proja \A \proja - \projw \W \projw\|\nonumber\\
		\leq & \|\proja (\A-\W) \proja\| + \|(\proja-\projw) \W \proja\| + \|\proja \W (\proja - \projw)\nonumber\|\\
		      &+ \|(\proja-\projw) \W (\proja-\projw)\|\nonumber\\
		\leq &  \|\A-\W\| + 2\|\proja-\projw\|\|\W\| + \|\proja-\projw\|^2\|\W\|  \ .
\label{bound_hXhXT-XXT}
	\end{align}
	By Appendix A.1 of \citet{lei2013consistency}, we have
		\begin{equation}
			\|\proja - \projw\|\leq \|\proja - \projw\|_F\leq \frac{2\sqrt{2K}\|\A-\W\|}{\lambda_{\min}(\W)}  \ .
\label{bound_projections}
		\end{equation}
	By Theorem 5.2 of \citet{lei2013consistency}, when $\log n/(n\alpha_n)\to 0$ and $\theta_i$'s are uniformly bounded by a constant $M_\theta$, there exists constant $C_{r, M_\theta}$ depending on $r$, such that with probability $1-n^{-r}$
	\begin{equation}
		\|\A-\W\|\leq C_r\sqrt{n\alpha_n} \ .
	\end{equation}
	Since $M_\theta$ is a global constant in our setting, we write $C_r:=C_{r, M_\theta}$.  
	
	In order to bound $\|\estimator \Oh_{\hat{X}} - \X_{\tau_n}^*\|_F$, it remains to bound the maximum and minimum eigenvalues of $\W$. We will show that the eigenvalues of $(n\alpha_n)^{-1}\W$ converge to those of $\mathbb{E}[\theta_1^2 \Z_{1\cdot}^T \Z_{1\cdot}]\B$, which is strictly positive definite: for any $v\in\mathbb{R}^{K}$, 
    $$ v^T \mathbb{E}[\Z_{1\cdot}\Z_{1\cdot}^T]\geq \sum_{k=1}^K\mathbb{P}(1\in\mc_k)\cdot v^T \ee_k \ee_k^T v\geq 0 \ , $$ 
where $\mc_k$ denotes the set of nodes in community $k$ and $\ee_k$ denotes the vector the $k$th element equal to 1 and all others being 0. Equality holds only when all $v^T \ee_k \ee_k^T v = v_k^2 = 0$, i.e. $v=0$.
	\begin{claim}\label{claim_A2}
		Assume that $\theta_i>0$ for all $i$, and both $\Z$ and $B$ are full rank. Let $\lambda_0$ and $\lambda_1$ denote the smallest and largest eigenvalues of $\mathbb{E}[\theta_1^2 \Z_{1\cdot}^T \Z_{1\cdot}]B$.  Then 
		\begin{align}
		 \mathbb{P}\left( \Big|\frac{\lambda_{\max}(\W)}{n\alpha_n} - \lambda_1\Big| > \epsilon \right) &\leq 2K^2 \exp\left(-\frac{\frac{1}{2}n\epsilon^2}{M_\theta^4K^3+\frac{1}{3}M_\theta^2K\sqrt{K}\epsilon}\right) \label{bound_max_eig_value}\\
		 \mathbb{P}\left( \Big|\frac{\lambda_{\min}(\W)}{n\alpha_n} - \lambda_0\Big| > \epsilon \right) &\leq 2K^2 \exp\left(-\frac{\frac{1}{2}n\epsilon^2}{M_\theta^4K^3+\frac{1}{3}M_\theta^2K\sqrt{K}\epsilon}\right) \label{bound_min_eig_value}
		\end{align}
	\end{claim}
	\begin{proof}[Proof of Claim \ref{claim_A2}]
		For $k=1,\ldots, K$, let $\lambda_k$ denote the $k$th largest eigenvalue of $\W$, then
			\begin{align*}
				\lambda_k\left(\frac{\W}{n\alpha_n}\right) &= \lambda_k\left(\frac{{\BoldTheta} \Z \B \Z^T {\BoldTheta}}{n}\right) = \lambda_k\left( \frac{\B^{1/2} \Z^T {\BoldTheta}^2 \Z \B^{1/2}}{n} \right)\\
				&= \lambda_k\left(\frac{\Z^T {\BoldTheta}^2 \Z \B}{n}\right) = \lambda_k\left(\frac{1}{n}\sum_{i=1}^n\theta_i^2 \Z_{i\cdot}^T \Z_{i\cdot} \B\right)
			\end{align*}
		where the second equality is due to the fact that $\X\X^T$ and $\X^T \X$ share the same $K$ leading eigenvalues ($\X=\sqrt{\alpha_n}{\BoldTheta} \Z \B^{1/2}$). The third equality holds because $\B^{1/2}$ is full rank. To show \eqref{bound_min_eig_value},  it suffices to show that
			\begin{equation}
				\mathbb{P}\left( \Big\| \frac{1}{n}\sum_{i=1}^n\theta_i^2 \Z_{i\cdot}^T \Z_{i\cdot} \B - \mathbb{E}[\theta_1^2 \Z_{1\cdot}^T\Z_{1\cdot}\B] \Big\| > \epsilon \right) \leq 2\exp\left(-\frac{\frac{1}{2}n\epsilon^2}{M_\theta^4K^3+\frac{1}{3}M_\theta^2K\sqrt{K}\epsilon}\right) \label{bound_all_eig_values}
			\end{equation}
 For any $k, l\in\{1, \ldots, K\}$,  \{$\theta_i^2 (\Z_{i\cdot}^T \Z_{i\cdot} \B)_{kl}\}_i$ are an iid sequence uniformly bounded by $M_\theta^2\sqrt{K}$ with mean $\left(\mathbb{E}[\theta_i^2 (\Z_{i\cdot}^T \Z_{i\cdot} \B)]\right)_{kl}$.   By Bernstein's inequality, 
			\begin{equation*}
				\mathbb{P}\left( \Big|\left(\frac{1}{n}\sum_{i=1}^n\theta_i^2\Z_{i\cdot}^T\Z_{i\cdot}\B - \mathbb{E}[\theta_1^2\Z_{1\cdot}^T\Z_{1\cdot}\B]\right)_{kl}\Big| > \epsilon \right) \leq 2\exp\left(-\frac{\frac{1}{2}n\epsilon^2}{M_\theta^4K+\frac{1}{3}M_\theta^2\sqrt{K}\epsilon}\right) \ . 
			\end{equation*}
		By the union bound  and $\| A \| \leq \| A \|_{F}$, we have
			\begin{align*}
				& \mathbb{P}\left( \Big\| \frac{1}{n}\sum_{i=1}^n\theta_i^2\Z_{i\cdot}^T\Z_{i\cdot}\B - \mathbb{E}[\theta_1^2\Z_{1\cdot}^T\Z_{1\cdot}\B] \Big\| > K\epsilon \right)
			\leq 2K^2 \exp\left(-\frac{\frac{1}{2}n\epsilon^2}{M_\theta^4K+\frac{1}{3}M_\theta^2\sqrt{K}\epsilon}\right)  \ . 
			\end{align*}
		Replacing $\epsilon$ by $\epsilon/K$ completes the proof of Claim \ref{claim_A2}.
	\end{proof}
	
 We now return to the proof of Lemma \ref{lemma_1} and complete the bound on $\|\estimator \Oh_{\hat{X}} - \X_{\tau_n}^*\|_F$. Taking $\epsilon$ to be $\frac{\lambda_1}{2}$ and $\frac{\lambda_0}{2}$ respectively in (\ref{bound_max_eig_value}) and (\ref{bound_min_eig_value}), by Claim \ref{claim_A2}, $\|W\|\leq \frac{3}{2}n\alpha_n\lambda_1\leq \frac{3}{2}\mlone n\alpha_n K$ and $\lambda_{\min}(\W)\geq \frac{1}{2}n\alpha_n\lambda_0\geq \frac{1}{2}\mlzero n_\alpha$ hold with probability:
	\begin{align*}
	&1-2K^2 \exp\left(-\frac{\frac{1}{8}n\lambda_0^2}{M_\theta^4K^3+\frac{1}{6}M_\theta^2K\sqrt{K}\lambda_0}\right) + 2K^2 \exp\left(-\frac{\frac{1}{8}n\lambda_1^2}{M_\theta^4K^3+\frac{1}{6}M_\theta^2K\sqrt{K}\lambda_1}\right)\\ \geq & 1 - 4K^2\exp\left( -\frac{\frac{1}{8}n\mlzero^2}{M_\theta^4K^5 + \frac{1}{6}M_\theta^2K^{5/2}\mlzero} \right)
	\end{align*}
	Plugging this, together with (\ref{bound_projections}) and (\ref{bound_projections}), back into (\ref{bound_hXhXT-XXT}), we have
	\begin{align}
		\|\hat{\X}\hat{\X}^T - \X\X^T\| &\leq \|\A-\W\|\left(1+\frac{4\sqrt{2K}\|\W\|}{\lambda_{\min}(\W)} + \frac{8K\|\A-\W\|\|\W\|}{(\lambda_{\min}(\W))^2}\right)\nonumber\\
		&\leq C_r\sqrt{n\alpha_n}\left(1+\frac{12\sqrt{2K}\mlone}{\mlzero} + \frac{48K^2 C_r\mlone}{\mlzero^2 \sqrt{n\alpha_n}}\right) \label{final_bound_XhXhT_XXT}
	\end{align}
	with probability at least $1 - 4K^2\exp\left( -\frac{\frac{1}{8}n\mlzero^2}{M_\theta^4K^5 + \frac{1}{6}M_\theta^2K^{5/2}\mlzero} \right) - n^{-r}$. Then plugging (\ref{final_bound_XhXhT_XXT}) and Claim \ref{claim_A2} into \eqref{Lemma_1_first_ineq}, we have

	\begin{align}
			&\|\hat{\X}_{\tau_n}^* \Oh_{\hat{X}} - \X_{\tau_n}^*\|_F\nonumber\\
		\leq 	&\frac{2\sqrt{K}}{\tau_n}\frac{\|\hat{\X}\hat{\X}^T-\X\X^T\| \left( \sqrt{\|\hat{\X}\hat{\X}^T-\X\X^T\|} + 2\sqrt{\|\X\X^T\|} \right)}{\lambda_{\min}(\X\X^T)}\nonumber\\
		\leq 	&\frac{2\sqrt{K}}{\tau_n}\frac{ C_r\sqrt{n\alpha_n}\left( 1+\frac{12\sqrt{2K}K\mlone}{\mlzero} + \frac{48K^2 C_r\mlone}{\mlzero\sqrt{n\alpha_n}} \right) }{n\alpha_n \frac{\mlzero}{2K}}\nonumber\\
		\cdot	&\left( \left[ C_r\sqrt{n\alpha_n}\left( 1+\frac{12\sqrt{2K}K\mlone}{\mlzero} + \frac{48K^2 C_r\mlone}{\mlzero\sqrt{n\alpha_n}} \right) \right]^{\frac{1}{2}} + \sqrt{6\mlone n\alpha_n} \right)\nonumber\\
		= 	&\frac{4C_r\sqrt{K}}{\tau_n\mlzero}\left( 1+\frac{12\sqrt{2}\mlone K\sqrt{K}}{\mlzero} + \frac{48C_r\mlone K^2}{\mlzero^2\sqrt{n\alpha_n}} \right)\nonumber\\
		\cdot 	&\left( \left[ C_r\left( \frac{1}{\sqrt{n\alpha_n}} + \frac{12\sqrt{2}\mlone}{\mlzero}\frac{K\sqrt{K}}{\sqrt{n\alpha_n}} + \frac{48C_r\mlone}{\mlzero^2}\frac{K^2}{n\alpha_n} \right) \right]^{\frac{1}{2}} + \sqrt{6\mlone} \right)\label{final_bound_lemma_1}
	\end{align}

	By assumption $K=O(\log(n))$, we have $\frac{K^3}{n\alpha_n}\to 0$, thus for large enough $n$, the following inequalities that simplify \eqref{final_bound_lemma_1} hold:
	\begin{align*}
		\frac{(24-12\sqrt{2})\mlone K\sqrt{K}}{\mlzero}-1 &\geq \frac{48C_r \mlone K^2}{\mlzero^2\sqrt{n\alpha_n}}\\
		(3-\sqrt{6})\sqrt{\mlone} &\geq \left[ C_r\left( \frac{1}{\sqrt{n\alpha_n}} + \frac{12\sqrt{2}\mlone}{\mlzero}\frac{K\sqrt{K}}{\sqrt{n\alpha_n}} + \frac{48C_r \mlone}{\mlzero^2}\frac{K^2}{n\alpha_n} \right) \right]^{\frac{1}{2}} \ , 
	\end{align*}
 and	we have
	\begin{equation}
	 \textrm{RHS of (\ref{final_bound_lemma_1})} \leq \tilde{C}_r \cdot \frac{K^2}{\tau_n} \label{proof_lemma_1_part_1_final}
	\end{equation}
	where the constant $\tilde{C}_r:=\frac{288C_r \mlone^{3/2}}{\mlzero^2}$, which for simplicity we will continue to write as $C_r$. This completes the bound on $\|\estimator \Oh_{\hat{X}} - \X_{\tau_n}^*\|_F$.

The second part of the proof requires a bound on $\|\X_{\tau_n}^* - \X^*\|_F$.  From the defintion of $\X_{\tau_n}^*$, we can write 
	\begin{align}
		\|\X_{\tau_n}^* - \X^*\|_F^2 &= 
 \sum_{i=1}^n\left( \frac{\tau_n/\sqrt{\alpha_n}}{\|\X_{i\cdot}\|_2/\sqrt{\alpha_n}+\tau_n/\sqrt{\alpha_n}} \right)^2 \ . 
\label{def_bias}
	\end{align}
	Since $\frac{\|X_{\cdot}\|_2}{\sqrt{\alpha_n}} = \theta_i\|\Z_{i\cdot}\B^{1/2}\|_2 =\theta_i\sqrt{\Z_{i\cdot}\B \Z_{i\cdot}^T} \geq \theta_i\sqrt{\lambda_{\min}(\B)}\geq\theta_i\sqrt{m_B}>0$, by assumption, for $\epsilon\in(0, \epsilon_0)$, we have $\mathbb{P} \left( \frac{\|\X_{i\cdot}\|_2}{\sqrt{\alpha_n}}<\epsilon\sqrt{m_B} \right) \leq \mathbb{P}(\theta_i<\epsilon) \leq C_\theta \epsilon$. Therefore, for any $\epsilon\in (0, \epsilon_0)$, we have
	\begin{align}
		\mathbb{E}\left[\left(\frac{\tau_n/\sqrt{\alpha_n}}{\|\X_{i\cdot}\|_2/\sqrt{\alpha_n} + \tau_n/\sqrt{\alpha_n}}\right)^2\right] &\leq C_\theta\epsilon + (1-C_\theta\epsilon)\left(\frac{\tau_n/\sqrt{\alpha_n}}{\epsilon\sqrt{m_B}+\tau_n/\sqrt{\alpha_n}}\right)^2 \label{ineq_on_bias}
	\end{align}
	By assumption, $\tau_n/\sqrt{\alpha_n}\to 0$, so for large enough $n$ such that $\tau_n/\sqrt{\alpha_n}<\epsilon_0^{3/2}$, taking $\epsilon:=(\tau_n/\sqrt{n})^{2/3}<\epsilon_0$, we have
	\begin{align*}
		\textrm{LHS of (\ref{ineq_on_bias})} &\leq C_\theta(\tau_n/\sqrt{\alpha_n})^{2/3} + (1-C_\theta(\tau_n/\sqrt{\alpha_n})^{2/3})\left( \frac{(\tau_n/\sqrt{\alpha_n})^{1/3}}{m_B+(\tau_n/\sqrt{\alpha_n})^{1/3}} \right)^2\\
		&\leq \left( C_\theta + m_B^{-1} \right) (\tau_n/\sqrt{\alpha_n})^{2/3}
	\end{align*}
	Then for any $\delta>0$, we have
	\begin{align}
		&\mathbb{P}\left( \frac{\|\X_{\tau_n}^* - \X^*\|_F^2}{n} - (C_\theta + m_B^{-1})(\tau_n/\sqrt{\alpha_n})^{2/3} > \delta \right)\nonumber\\
		\leq &\mathbb{P}\left( \frac{\|\X_{\tau_n}^* - \X^*\|_F^2}{n} - \mathbb{E}\left[\frac{\|\X_{\tau_n}^* - \X^*\|_F^2}{n}\right] > \delta \right)\leq \exp\left(-\frac{\frac{1}{2}\delta^2n}{1+\frac{1}{3}\delta}\right)\label{proof_lemma_1_part_2}
	\end{align}
	where the second inequality is Bernstein's inequality plus the fact that each summand in the numerator of (\ref{def_bias}) is uniformly bounded by $1$ with an expectation bounded by $\left(C_\theta+m_B^{-1}\right)(\tau_n/\sqrt{\alpha_n})^{2/3}$.
	
We can now complete the proof of Lemma \ref{lemma_1}. Combining \eqref{proof_lemma_1_part_1_final} and \eqref{proof_lemma_1_part_2} yields
	\begin{align}
		&\mathbb{P}\left( \frac{\|\hat{\X}_{\tau_n}^* \Oh_{\hat{X}} - \X^*\|_F}{\sqrt{n}} \leq \frac{C_r K^2}{\tau_n\sqrt{n}} + \delta + (C_\theta+m_B^{-1})(\tau_n/\sqrt{\alpha_n})^{2/3} \right)\nonumber\\
		\geq &1-P_1(n, \alpha_n, K; r) \label{lemma_1_before_final}
	\end{align}
	The optimal $\tau_n$ that minimizes the RHS of the inequality inside the probability is $\tau_n=\frac{\alpha_n^{0.2}K^{1.5}}{n^{0.3}}$ -- here for simplicity we drop the constant factor in $\tau_n$, the effect of which we evaluated empirically in Section \ref{section_simulations}. Plugging this into (\ref{lemma_1_before_final}) and taking $\delta=K(n\alpha_n)^{-\frac{1}{5}}$ and denote $C_1:=\left(\frac{2}{3}(C_\theta+m_B^{-1})C_r^{\frac{2}{3}}\right)^{\frac{3}{5}}+1+\left(\frac{3}{2}C_r(C_\theta+m_B^{-1})^{\frac{3}{2}}\right)^{\frac{2}{5}}$ and $P_1(n, \alpha_n, K; r) := n^{-r} - 4\exp\left( -\frac{\frac{1}{8}\mlzero^2 n}{M_\theta^4K^5+\frac{1}{6}M_\theta^2\mlzero K^{\frac{5}{2}}} \right) \allowbreak - 2\exp\left( -\
	\frac{\frac{1}{2}K^{\frac{8}{5}}n^{\frac{3}{5}}\alpha_n^{-\frac{2}{5}}}{1+\frac{1}{3}K^{\frac{4}{3}}(n\alpha_n)^{\frac{1}{5}}} \right)$, 
	we obtain Lemma \ref{lemma_1}.  Note that since we are free to choose and fix $r$, we can drop the dependence on it from $P_1$, as we did in the statement of Lemma \ref{lemma_1}.
\end{proof}

The next step is to show the convergence of the estimated cluster centers $\hat{\eS}$ to the population cluster centers $\eS_\mf$.

\begin{lemma}\label{lemma_2}
	Recall that $\mathcal{F}$ denotes the popualtion distribution of the rows of $\X^*$ and let $\hat{\eS}\in\arg\min_S \lossfunc_n(\hat{\X}_{\tau_n}^*; \eS)$ and $S_{\mathcal{F}} \in\arg\min_S \lossfunc(\mathcal{F}; \eS)$. Assume that conditions \ref{condition_theta_distribution}, \ref{condition_eigen_ZB}, \ref{condition_eigen_B}
	 and \ref{condition_derivative} hold. 
	Then if $\frac{\log n}{n\alpha_n}\to 0$ and $K=O(\log n)$, for large enough $n$ 
	we have
	\begin{align}
		\mathbb{P}\left(D_H(\hat{\eS}\Oh_{\hat{X}}; \eS_\mf) \leq \frac{C_2 K^{\frac{9}{5}}}{(n\alpha_n)^{\frac{1}{5}}}\right) &\leq 1 - P_1(n, \alpha_n, K) - P_2(n, \alpha_n, K)
	\end{align}
	where $C_2$ is a global constant, $P_2(n, \alpha_n, K)\to 0$ as $n\to\infty$ and $D_H(\cdot, \cdot)$ is as defined in condition \ref{condition_derivative}.
\end{lemma}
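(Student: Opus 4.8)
The plan is to combine the rotational invariance of the $K$-medians loss, the deterministic perturbation bound of Lemma~\ref{lemma_1}, a uniform law of large numbers for the empirical loss, and the curvature condition~\ref{condition_derivative}. First I would record that for any orthonormal $\Oh$ and any matrices $\Q,\eS$ one has $\lossfunc_n(\Q\Oh;\eS\Oh)=\lossfunc_n(\Q;\eS)$, and $\lossfunc(\mf;\cdot)$ enjoys the same invariance; hence $\hat{\eS}$ minimizes $\lossfunc_n(\hat{\X}_{\tau_n}^*;\cdot)$ if and only if $\hat{\eS}\Oh_{\hat X}$ minimizes $\lossfunc_n(\hat{\X}_{\tau_n}^*\Oh_{\hat X};\cdot)$. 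Moreover, since every row of $\hat{\X}_{\tau_n}^*$ and of $\X^*$ lies in the closed unit ball and $\eS_\mf=\B^{1/2}$ has rows of norm $1$ by~\ref{condition_iden_B}, projecting any cluster center onto that ball never increases $\min_k\|\x-\eS_{k\cdot}\|_2$ for $\|\x\|_2\le 1$, so we may assume throughout that all center matrices (in particular $\hat{\eS}$ and $\eS_\mf$) have rows in $B(0,1)\subset\mathbb{R}^K$. This reduces the problem to comparing $\hat{\eS}\Oh_{\hat X}$, the minimizer of $\lossfunc_n(\hat{\X}_{\tau_n}^*\Oh_{\hat X};\cdot)$, with $\eS_\mf$, where by Lemma~\ref{lemma_1} the rows of $\hat{\X}_{\tau_n}^*\Oh_{\hat X}$ are close on average to the rows of $\X^*$, which are i.i.d.\ from $\mf$.

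Next I would bound the excess population loss of $\hat{\eS}\Oh_{\hat X}$ by the standard telescoping decomposition, in which the middle term $\lossfunc_n(\hat{\X}_{\tau_n}^*\Oh_{\hat X};\hat{\eS}\Oh_{\hat X})-\lossfunc_n(\hat{\X}_{\tau_n}^*\Oh_{\hat X};\eS_\mf)\le 0$ by optimality of $\hat{\eS}\Oh_{\hat X}$, leaving
\[
\lossfunc(\mf;\hat{\eS}\Oh_{\hat X})-\lossfunc(\mf;\eS_\mf)\ \le\ 2\sup_{\eS}\big|\lossfunc_n(\X^*;\eS)-\lossfunc(\mf;\eS)\big|\ +\ 2\sup_{\eS}\big|\lossfunc_n(\hat{\X}_{\tau_n}^*\Oh_{\hat X};\eS)-\lossfunc_n(\X^*;\eS)\big|.
\]
The second supremum is deterministic on the event of Lemma~\ref{lemma_1}: since $\big|\min_k\|\x-\eS_{k\cdot}\|_2-\min_k\|\z-\eS_{k\cdot}\|_2\big|\le\|\x-\z\|_2$ row by row, Cauchy--Schwarz gives $\sup_{\eS}\big|\lossfunc_n(\hat{\X}_{\tau_n}^*\Oh_{\hat X};\eS)-\lossfunc_n(\X^*;\eS)\big|\le \|\hat{\X}_{\tau_n}^*\Oh_{\hat X}-\X^*\|_F/\sqrt n\le C_1 K^{4/5}(n\alpha_n)^{-1/5}$.

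For the empirical-process term, all relevant points lie on the unit sphere (the support of $\mf$), so after the reduction above each function $\eS\mapsto\min_k\|\x-\eS_{k\cdot}\|_2$ is bounded by $2$ and $1$-Lipschitz in $\eS\in (B(0,1))^K\subset\mathbb{R}^{K^2}$, whose $\epsilon$-covering number has logarithm of order $K^2\log(1/\epsilon)$. A net argument, using Bernstein's inequality at each net point, a union bound, and the Lipschitz property to pass from the net to all of the set, then gives $\sup_{\eS}\big|\lossfunc_n(\X^*;\eS)-\lossfunc(\mf;\eS)\big|=O\big(\sqrt{K^2\log n/n}\big)$ with probability $1-P_2(n,\alpha_n,K)$; since $K=O(\log n)$ and $\alpha_n\le 1$, this is of strictly smaller order than $K^{4/5}(n\alpha_n)^{-1/5}$, even after multiplication by $K/M$ in the last step. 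Finally I would invoke condition~\ref{condition_derivative}, namely $D_H(\hat{\eS}\Oh_{\hat X},\eS_\mf)\le (K/M)\big(\lossfunc(\mf;\hat{\eS}\Oh_{\hat X})-\lossfunc(\mf;\eS_\mf)\big)$; combining it with the two displayed bounds yields $D_H(\hat{\eS}\Oh_{\hat X},\eS_\mf)\le C_2 K^{9/5}(n\alpha_n)^{-1/5}$ with $C_2=2C_1/M$ up to a lower-order correction, on the intersection of the Lemma~\ref{lemma_1} event and the covering event, whose complement has probability at most $P_1(n,\alpha_n,K)+P_2(n,\alpha_n,K)$.

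I expect the uniform concentration step to be the main obstacle. One must make the reduction to centers in a bounded set fully rigorous (including the observation that the $K$-medians minimizers may be taken there), produce a covering/chaining bound over the $K^2$-dimensional configuration space that is sharp enough not to degrade the $(n\alpha_n)^{-1/5}$ rate after the $K/M$ blow-up, and, throughout all comparisons, carry the data-dependent rotation $\Oh_{\hat X}$ coherently so that Lemma~\ref{lemma_1} is applied to exactly the quantity $\|\hat{\X}_{\tau_n}^*\Oh_{\hat X}-\X^*\|_F/\sqrt n$ that appears. The remaining ingredients --- the telescoping bound, Cauchy--Schwarz, and the curvature inequality --- are routine once these are in place.
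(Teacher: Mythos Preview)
Your proposal is correct and follows essentially the same route as the paper's proof: the same telescoping decomposition, the same Lipschitz/Cauchy--Schwarz bound for the perturbation term via Lemma~\ref{lemma_1}, an $\epsilon$-net plus Bernstein for the empirical-process term, and condition~\ref{condition_derivative} to convert excess population loss into a Hausdorff bound. The only cosmetic differences are that the paper confines centers to a radius-$3$ ball via Pollard's argument rather than your projection to $B(0,1)$, and it sets the net mesh and Bernstein level to $\epsilon=\delta=K^{4/5}(n\alpha_n)^{-1/5}$ to match the Lemma~\ref{lemma_1} rate (giving $C_2=(C_1+3)/M$) rather than optimizing the empirical-process term separately as you do.
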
 

\begin{proof}[Proof of Lemma \ref{lemma_2}]
	Since the rows of $\hat{\X}_{\tau_n}^*$ and $\X^*$ have $l_2$ norms bounded by 1, the sample space of $\mathcal{F}$ is uniformly bounded in the unit $l_2$ ball.  Following the argument of \citet{pollard1981strong}, we show that all cluster centers estimated by $K$-medians fall in the $l_2$ ball centered at origin with radius $3$, which we denote as ${\cal R}$.  Otherwise, if there exists an estimated cluster center $\es$ outside ${\cal R}$, it is at least distance $2$ away from any point assigned to its cluster. Therefore, moving $\es$ to an arbitrary point inside the unit ball yields an improvement in the loss function since any two points inside the unit ball are at most distance $2$ away from each other. 

	We first show the uniform convergence of $\lossfunc_n(\estimator \Oh_{\hat{X}}; \eS)$ to $\lossfunc(\mf; \eS)$ and then show the optimum of $\lossfunc_n(\estimator \Oh_{\hat{X}}; \eS)$ is close to that of $\lossfunc(\mf; \eS)$. Let $\hat{\eS} \Oh_{\hat{X}}:=\arg\min_S \lossfunc_n(\estimator \Oh_{\hat{X}}; \eS)$. We start with showing that
		\begin{equation}
			\sup_{\eS\subset {\cal R}}|\lossfunc_n(\hat{\X}_{\tau_n}^*\Oh_{\hat{X}}; \eS) - \lossfunc_n(\X^*; \eS)| \leq \frac{\|\hat{\X}_{\tau_n}^*\Oh_{\hat{X}} - \X^*\|_F}{\sqrt{n}} \label{part_1} \ .
		\end{equation}
	To prove \eqref{part_1}, take any $\es\in {\cal R}$. For each $i$, let $\hat{\es}$ and $\es$ be (possibly identical) rows in $\eS$ that are closest to $(\hat{\X}_{\tau_n}^*\Oh_{\hat{X}})_{i\cdot}$ and $\X^*_{i\cdot}$ respectively in $l_2$ norm.  We have
		\begin{align*}
			\|\X^*_{i\cdot} - \es\|_2 - \|(\hat{\X}_{\tau_n}^*\Oh_{\hat{X}})_{i\cdot} - \hat{\es}\|_2 
			&\leq \|(\hat{\X}_{\tau_n}^*\Oh_{\hat{X}})_{i\cdot} - \X^*_{i\cdot}\|_2
		\end{align*}
	and similarly, $\|(\hat{\X}_{\tau_n}^*\Oh_{\hat{X}})_{i\cdot} - \hat{\es}\|_2 - \|\X^*_{i\cdot} - \es\|_2 \leq \|\X^*_{i\cdot} - (\hat{\X}_{\tau_n}^*\Oh_{\hat{X}})_{i\cdot}\|_2$. Thus $|\|(\hat{\X}_{\tau_n}^*\Oh_{\hat{X}})_{i\cdot} - \hat{\es}\|_2 - \|\X^*_{i\cdot} - \es\|_2| \leq \|(\hat{\X}_{\tau_n}^*\Oh_{\hat{X}})_{i\cdot} - \X^*_{i\cdot}\|_2$. Combining this inequalities for all rows, we have
		\begin{align}
		& |\lossfunc_n(\hat{\X}_{\tau_n}^*\Oh_{\hat{X}}; \eS) - \lossfunc_n(\X^*; \eS)| 
			= \Big| \frac{1}{n}\sum_{i=1}^n \left(\|(\hat{\X}_{\tau_n}^*\Oh_{\hat{X}})_{i\cdot} - \hat{\es}\|_2 - \|\X^*_{i\cdot} - \es\|_2\right) \Big|\nonumber\\
			& \leq \sqrt{ \frac{1}{n} \sum_{i=1}^n \|(\hat{\X}_{\tau_n}^*\Oh_{\hat{X}})_{i\cdot} - \X^*_{i\cdot}\|_2^2 } = \frac{1}{\sqrt{n}} \|\hat{\X}_{\tau_n}^*\Oh_{\hat{X}} - \X^*\|_F  \ .
\label{uniform_bound_XhatO_Xstar}
		\end{align}
	Then since that (\ref{uniform_bound_XhatO_Xstar}) holds for any $\eS$, the uniform bound (\ref{part_1}) follows.
	
	For simplicity, we introduce the notation ``$\eS\subset {\cal R}$'', by which we mean that the rows of a matrix $\eS$ belong to the set ${\cal R}$. We now derive the bound for $\sup_{\eS\subset {\cal R}}|\lossfunc_n(\X^*; \eS) - \lossfunc(\mf; \eS)|$, which, without taking the supremum, is easily bounded by Bernstein's inequality. To tackle the uniform bound, we employ an $\epsilon$-net (see, for example, \citet{Haussler:1986:ESR:10515.10522}).   There exists an $\epsilon$-net ${\cal R}_\epsilon$, with size $|{\cal R}_\epsilon|\leq C_{\cal R}\frac{K}{\epsilon} \log\frac{K}{\epsilon}$, where $C_{\cal R}$ is a global constant. For any $\tilde{\eS}\subset {\cal R}_\epsilon$, $\tilde{\eS}\in\mathbb{R}^{K\times K}$, notice that 
$\min_{1\leq k \leq K}\|\X^*_{i\cdot}-\tilde{\eS}_{k\cdot}\|_2$ is a random variable uniformly bounded by $6$ with expectation $\lossfunc(\mf;\tilde{\eS})$ for each $i$. Therefore, 
by Bernstein's inequality, for any $\delta>0$ we have
		\begin{align}
			\mathbb{P}(|\lossfunc_n(\X^*; \ts) - \lossfunc(\mf; \ts)|>\delta)	&\leq \exp\left( -\frac{\frac{1}{2}n\delta^2}{4R_M^2+\frac{2}{3}R_M\delta} \right) = \exp\left( -\frac{n\delta^2}{72+4\delta} \right)
		\end{align}
	The number of all such $\tilde{\eS}\subset {\cal R}_\epsilon$ is bounded by
		\begin{align*}
			\Big|\{\ts: \ts\subset {\cal R}_\epsilon\}\Big| &= \begin{pmatrix}
									C_{\cal R} \frac{K}{\epsilon} \log\frac{K}{\epsilon}\\
									K
			                                            \end{pmatrix} \leq \left( C_{\cal R}\frac{K}{\epsilon} \log\frac{K}{\epsilon} \right)^K
		\end{align*}
	By the union bound, we have
		\begin{equation}
			\mathbb{P}\left( \sup_{\ts\in {\cal R}_\epsilon} \Big| \lossfunc(\X^*; \ts) - \lossfunc(\mf; \ts) \Big|>\delta \right) < \left( C_{\cal R}\frac{K}{\epsilon} \log\frac{K}{\epsilon} \right)^K     \exp\left( -\frac{n\delta^2}{72+4\delta} \right)
\label{part_2_1}
		\end{equation}
	
	The above shows the uniform convergence of the loss functions for $\tilde{\eS}$ from the $\epsilon$-net ${\cal R}_\epsilon$. We then expand it to the uniform convergence of all $\eS\subset {\cal R}$. For any $\eS\subset {\cal R}$, there exists $\ts\subset {\cal R}_\epsilon$, such that both $\lossfunc_n(\cdot; \eS)$ and $\lossfunc(\cdot; \eS)$ can be well approximated by $\lossfunc_n(\cdot; \ts)$ and $\lossfunc(\cdot; \ts)$ respectively. To emphasize the dependence of $\tilde{\eS}$ on $\eS$, we write $\ts=\ts(\eS)$. Formally, we now prove the following.
		\begin{align}
			\sup_{\eS\subset {\cal R}}|\lossfunc_n(\X^*; \eS) - \lossfunc_n(\X^*; \ts(\eS))| &< \epsilon		\label{S_ts_Xstar}\\
			\sup_{\eS\subset {\cal R}}|\lossfunc(\mf; \eS) - \lossfunc(\mf; \ts(\eS))| &< \epsilon	\label{S_ts_FXstar}
		\end{align}
	To prove (\ref{S_ts_Xstar}) and (\ref{S_ts_FXstar}), for any $\eS\subset {\cal R}$, let $\ts\subset {\cal R}_\epsilon$ be a matrix formed by concatenating the points in ${\cal R}_\epsilon$ that best approximate the rows in $\eS$. Notice that $\ts$ formed such way may contain less than $K$ rows. In this case, we arbitrarily pick points in ${\cal R}_\epsilon$ to enlarge $\ts$ to $K$ rows. 
For any $x\in \mathbb{R}^K$, let $\es_0$ be the best approximation to $x$ among the rows of $\eS$ and $\tilde{\es}_0$ be the best approximation to $\es_0$ among the rows of $\ts$; let $\tilde{\es}_1$ be the best approximation to $x$ among the rows of $\ts$ and let $\es_1$ be the point among the rows of $\eS$ that is best approximated by $\tilde{\es}_1$. Since $\|\x-\es_0\|_2 \leq \|\x-\es_1\|_2 \leq \|\x-\es_1\|_2 + \|\es_1-\tilde{\es}_1\|_2 \leq \|\x-\tilde{\es}_1\|_2+\epsilon$, and similarly, $\|\x-\tilde{\es}_1\|_2 \leq \|\x-\tilde{\es}_0\|_2 \leq \|\x-\es_0\|_2+\epsilon$, we have
		\begin{equation}
			\Big| \min_{1\leq k \leq K}\|\x-\eS_{k\cdot}\|_2 - \min_{1\leq k \leq K}\|\x-\ts_{k\cdot}\|_2 \Big| = \Big| \|\x-\es_0\|_2 - \|\x-\tilde{\es}_1\|_2 \Big| \leq \epsilon \label{epsilon_point_control}
		\end{equation}
which implies  (\ref{S_ts_Xstar}) and (\ref{S_ts_FXstar}).
	
	Combining (\ref{part_1}), (\ref{part_2_1}), (\ref{S_ts_Xstar}) and (\ref{S_ts_FXstar}), we have shown that with probability $P_2(n, \epsilon, \delta):=1-\left( C_{\cal R}\frac{K+2}{\epsilon} \log\frac{K+2}{\epsilon} \right)^K \exp\left( -\frac{n\delta^2}{72+4\delta} \right)$,
		\begin{align}
			& \sup_{\eS\subset {\cal R}}\Big| \lossfunc(\hat{\X}_{\tau_n}^*\Oh_{\hat{X}}; \eS) - \lossfunc(\mf; \eS) \Big|\nonumber\\
		\leq 	& \sup_{\eS\subset {\cal R}}|\lossfunc(\hat{\X}_{\tau_n}^*\Oh_{\hat{X}}; \eS) - \lossfunc(\X^*; \eS)| + \sup_{\eS\subset {\cal R}}|\lossfunc(\X^*; \eS) - \lossfunc(\X^*; \ts(\eS))|\nonumber\\& + \sup_{\ts\subset {\cal R}_\epsilon} \Big| \lossfunc(\X^*; \ts) - \lossfunc(\mf; \ts) \Big| + \sup_{\eS\subset {\cal R}}|\lossfunc(\mf; \eS) - \lossfunc(\mf; \ts(\eS))|\nonumber\\
			&\leq \frac{\|\hat{\X}_{\tau_n}^*\Oh_{\hat{X}} - \X^*\|_F}{\sqrt{n}} + \delta + 2\epsilon \label{uniform_approx_FXhat_FFX}
		\end{align}
	Finally, we use (\ref{uniform_approx_FXhat_FFX}) to bound $D_H(\hat{\eS}, \eS_\mf)$.  Note that 
		\begin{align*}
		& 	 \lossfunc(\mf; \hat{\eS} \Oh_{\hat{X}}) - \lossfunc(\mf; \eS_\mf) \leq | \lossfunc(\mf; \hat{\eS} \Oh_{\hat{X}}) -  \lossfunc(\hat{\X}_{\tau_n}^*\Oh_{\hat{X}}; \hat{\eS} \Oh_{\hat{X}})|\\
			 &+ (\lossfunc(\hat{\X}_{\tau_n}^*\Oh_{\hat{X}}; \hat{\eS} \Oh_{\hat{X}}) - \lossfunc(\hat{\X}_{\tau_n}^*\Oh_{\hat{X}}; \eS_\mf))
			  + |\lossfunc(\hat{\X}_{\tau_n}^*\Oh_{\hat{X}}; \eS_\mf) - \lossfunc(\mf; \eS_\mf)|\\
			 & \leq 2 \sup_{\eS\subset {\cal R}}\Big| \lossfunc(\hat{\X}_{\tau_n}^*\Oh_{\hat{X}}; \eS) - \lossfunc(\mf; \eS) \Big| \ .
		\end{align*}
	Taking $\delta=\epsilon = \frac{K^{\frac{4}{5}}}{(n\alpha_n)^{\frac{1}{5}}}$, define $P_2(n, \alpha_n, K):=P_2(n, \epsilon, \delta)$ with plug-in values of $\delta$ and $\epsilon$.  
        To summarize, we have that with probability at least $1-P_1(n, \alpha_n, K; r) - P_2(n, \alpha_n, K)$, the following holds:
	\begin{align*}
		D_H(\hat{\eS}\Oh_{\hat{X}}, \eS_\mf) &\leq (MK^{-1})^{-1}(\lossfunc(\mf, \hat{\eS}\Oh_{\hat{X}}) - \lossfunc(\mf, \eS_\mf))\\
		&\leq 2K/M \sup_{\eS\subset {\cal R}}\Big| \lossfunc(\hat{\X}_{\tau_n}^*\Oh_{\hat{X}}; \eS) - \lossfunc(\mf; \eS) \Big|\\
		&\leq 2K/M \left( \frac{\|\hat{\X}_{\tau_n}^*\Oh_{\hat{X}} - \X^*\|_F}{\sqrt{n}} + \delta + 2\epsilon \right)\\
		&\leq  \frac{(C_1+3) K^{\frac{9}{5}}}{M(n\alpha_n)^{\frac{1}{5}}} =:\frac{C_2  K^{\frac{9}{5}}}{M(n\alpha_n)^{\frac{1}{5}}}
	\end{align*}
	where we let $C_2:=(C_1+3)/M$. This concludes the proof of Lemma \ref{lemma_2}

\end{proof}


\begin{proof}[Proof of the main result (Theorem \ref{main theorem})]
	Without loss of generality, we assume that the rows of $\hat{\eS}$ and $\eS_\mf$ are aligned in the sense that $\|\hat{\eS}\Oh_{\hat{X}}-\eS_\mf\|_2=D_H(\hat{\eS}\Oh_{\hat{X}}, \eS_\mf)$. We denote the unnormalized projection coefficients of $\estimator$ onto $\hat{\eS}$ by $\hat{\Y}$ and $\X^*$ onto $\eS_\mf=\B^{1/2}$ by $\Y$. We have:
		\begin{align}
			Y&=\X^*(\B^{1/2})^T\B^{-1}=\X^*\eS_\mf^T(\eS_\mf \eS_\mf^T)^{-1}\\
			\hat{Y}&=\estimator \hat{\eS}^T (\hat{\eS} \hat{\eS}^T)^{-1}=\estimator \Oh_{\hat{X}}(\hat{\eS} \Oh_{\hat{X}})^T (\hat{\eS} \Oh_{\hat{X}} (\hat{\eS} \Oh_{\hat{X}})^T)^{-1}
		\end{align}
	Recall that $\X_{i\cdot} = \Z_{i\cdot}\eS_\mf$, thus $\X^*_{i\cdot}=\frac{\Z_{i\cdot}\eS_\mf}{\|\Z_{i\cdot}\eS_\mf\|_2}$, and
		\begin{align*}
			\|\Y_{i\cdot}\|_2 &= \X^*_{i\cdot}\eS_\mf^T (\eS_\mf \eS_\mf^T)^{-1}= \frac{\|\Z_{i\cdot}\|_2}{\|\Z_{i\cdot}\eS_\mf\|_2} 
			= \frac{1}{\|\sum_{i=1}^K Z_{ik}(\eS_\mf)_{k\cdot}\|_2} \\ 
            & \geq \frac{1}{\sum_{k=1}^K |Z_{ik}|\|(\eS_\mf)_{k\cdot}\|_2}
			= \frac{1}{\sum_{k=1}^K |Z_{ik}|} \geq  \frac{1}{\|\Z_{i\cdot}\|_2\sqrt{K}} = \frac{1}{\sqrt{K}} \ . 
		\end{align*}
	The difference between the row-normalized projection coefficients $\Z$ and $\hat{\Z}$ can be bounded by the difference between $\Y$ and $\hat{\Y}$, since 
		\begin{align}
			\|\hat{\Z}_{i\cdot} - \Z_{i\cdot}\|_2 & = \Big\| \frac{\hat{Y}_{i\cdot}\|\Y_{i\cdot}\|_2 - \Y_{i\cdot}\|\hat{\Y}_{i\cdot}\|_2}{\|\hat{\Y}_{i\cdot}\|_2\|\Y_{i\cdot}\|_2} \Big\|_2   
			\leq \frac{2\|\hat{\Y}_{i\cdot} - \Y_{i\cdot}\|_2}{\|\Y_{i\cdot}\|_2} 
\label{bound_Z_by_Y} \ .  
		\end{align}
Then we have 
	\begin{align}
			&\frac{\|\hat{\Z}-\Z\|_2}{\sqrt{n}} \leq \frac{2\|\hat{\Y} - \Y\|_F\sqrt{K}}{\sqrt{n}} 
		= 	2  \sqrt{\frac{K}{n}} \| \hat{\X}_{\tau_n}^*\Oh_{\hat{X}} (\hat{\eS} \Oh_{\hat{X}})^T(\hat{\eS} \Oh_{\hat{X}}(\hat{\eS} \Oh_{\hat{X}})^T)^{-1} - \X^* \eS_\mf^T (\eS_\mf \eS_\mf^T)^{-1} \|_F\nonumber\\
		\leq 	&2  \sqrt{\frac{K}{n}}  \Big\{  \|\hat{\X}_{\tau_n}^*\Oh_{\hat{X}}(\hat{\eS} \Oh_{\hat{X}})^T\left( \hat{\eS} \Oh_{\hat{X}}(\hat{\eS} \Oh_{\hat{X}})^T)^{-1} - ( \eS_\mf \eS_\mf^T )^{-1} \right)\|_F + \|\hat{\X}_{\tau_n}^*\Oh_{\hat{X}}( (\hat{\eS} \Oh_{\hat{X}})^T - \eS_\mf^T ) (\eS_\mf \eS_\mf^T)^{-1} \|_F \nonumber\\
		& + \|\left( \hat{\X}_{\tau_n}^*\Oh_{\hat{X}} - \X^* \right)\eS_\mf^T (\eS_\mf \eS_\mf^T)^{-1}\|_F \Big\}=: I_1 + I_2 + I_3 \ , \label{main_theorem_before_final}
	\end{align}
	where
		\begin{align*}
			I_1 &:= 2  \sqrt{\frac{K}{n}}  \|\hat{\X}_{\tau_n}^*\Oh_{\hat{X}}(\hat{\eS} \Oh_{\hat{X}})^T\left( (\hat{\eS} \Oh_{\hat{X}}(\hat{\eS} \Oh_{\hat{X}})^T)^{-1} - ( \eS_\mf \eS_\mf^T )^{-1} \right)\|_F 
			\leq 2  \sqrt{\frac{K}{n}}  \left( \|\X^*\|_F+\|\hat{\X}_{\tau_n}^*\Oh_{\hat{X}} - \X^*\|_F \right)    \\
  & \cdot \left( \|\eS_\mf\|_F + \|\hat{\eS} \Oh_{\hat{X}} - \eS_\mf\|_F \right)
		\|(\hat{\eS} \Oh_{\hat{X}}(\hat{\eS} \Oh_{\hat{X}})^T)^{-1} - (\eS_\mf \eS_\mf^T)^{-1}\|_F \ , \\
			I_2 &:= 2  \sqrt{\frac{K}{n}} \|\hat{\X}_{\tau_n}^*\Oh_{\hat{X}}( (\hat{\eS} \Oh_{\hat{X}})^T - \eS_\mf^T ) (\eS_\mf \eS_\mf^T)^{-1} \|_F\\
			&\leq 2  \sqrt{\frac{K}{n}} \left( \|\X^*\|_F+\|\hat{\X}_{\tau_n}^*\Oh_{\hat{X}} - \X^*\|_F \right) \|\hat{\eS} \Oh_{\hat{X}} - \eS_\mf\|_F \|(\eS_\mf \eS_\mf^T)^{-1}\|_F\\
			I_3 &:= 2  \sqrt{\frac{K}{n}} \|\left( \hat{\X}_{\tau_n}^*\Oh_{\hat{X}} - \X^* \right)\eS_\mf^T (\eS_\mf \eS_\mf^T)^{-1}\|_F\\
			&\leq 2 \sqrt{\frac{K}{n}} \|\hat{\X}_{\tau_n}^*\Oh_{\hat{X}} - \X^*\|_F\|\eS_\mf\|_F \|(\eS_\mf \eS_\mf^T)^{-1}\|_F
		\end{align*}
	The term $\|(\hat{\eS} \Oh_{\hat{X}}(\hat{\eS} \Oh_{\hat{X}})^T)^{-1} - (\eS_\mf \eS_\mf^T)^{-1}\|_F$ is bounded by the following claim.
	
	\begin{claim}\label{claim_A3}
		For two $K\times K$ matrices $\V_1$ and $\V_2$ such that $\|\V_2\|_F= \sqrt{K}$, $\|\V_1-\V_2\|_2\leq\epsilon$, and $\lambda_{\min}(\V_2)>0$, we have 
		\begin{align}
			\|(\V_1\V_1^T)^{-1} - (\V_2\V_2^T)^{-1}\|_F 	&\leq K^2\left(\lambda_{\min}(\V_2\V_2^T)-K(2+\epsilon)\epsilon\right)^{-1}\nonumber\\
						&\cdot(\lambda_{\min}(\V_2\V_2^T))^{-1}(2+\epsilon)\epsilon \label{ineq_lemma_A3}
		\end{align}
	\end{claim}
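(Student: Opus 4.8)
\textbf{Proof plan for Claim \ref{claim_A3}.}
The plan is to reduce the claim to a standard matrix perturbation estimate via the resolvent identity. Writing $\bm{A} = \V_1\V_1^T$ and $\bm{B} = \V_2\V_2^T$, both symmetric, one has $\bm{A}^{-1} - \bm{B}^{-1} = \bm{A}^{-1}(\bm{B}-\bm{A})\bm{B}^{-1}$ whenever $\bm{A}$ is invertible, so
$$\|(\V_1\V_1^T)^{-1} - (\V_2\V_2^T)^{-1}\|_F \;\le\; \|(\V_1\V_1^T)^{-1}\|_2\,\cdot\,\|\V_1\V_1^T - \V_2\V_2^T\|_F\,\cdot\,\|(\V_2\V_2^T)^{-1}\|_2 .$$
The task is then to bound each of the three factors on the right, and the claimed inequality will follow after crude bookkeeping on the powers of $K$ incurred when passing between the spectral and Frobenius norms of the $K\times K$ factors.

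For the middle factor, write $\V_1 = \V_2 + \bm{E}$ with $\|\bm{E}\|_2 \le \epsilon$, so that $\V_1\V_1^T - \V_2\V_2^T = \V_2\bm{E}^T + \bm{E}\V_2^T + \bm{E}\bm{E}^T$. Using $\|\V_2\|_F = \sqrt{K}$, the crude bound $\|\bm{E}\|_F \le \sqrt{K}\,\|\bm{E}\|_2 \le \sqrt{K}\,\epsilon$, and submultiplicativity of the Frobenius norm against the spectral norm, each of the three terms is at most a constant times $\sqrt{K}\,\epsilon$ or $\sqrt{K}\,\epsilon^2$, giving $\|\V_1\V_1^T - \V_2\V_2^T\|_F \le \sqrt{K}(2+\epsilon)\epsilon \le K(2+\epsilon)\epsilon$; the same bound holds for the spectral norm since $\|\cdot\|_2 \le \|\cdot\|_F$.

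For the two outer factors, note that $\lambda_{\min}(\V_2) > 0$ makes $\V_2\V_2^T$ positive definite, so $\|(\V_2\V_2^T)^{-1}\|_2 = \lambda_{\min}(\V_2\V_2^T)^{-1}$. By Weyl's inequality and the spectral-norm bound just obtained, $\lambda_{\min}(\V_1\V_1^T) \ge \lambda_{\min}(\V_2\V_2^T) - \|\V_1\V_1^T - \V_2\V_2^T\|_2 \ge \lambda_{\min}(\V_2\V_2^T) - K(2+\epsilon)\epsilon$; the claim is only nonvacuous when $\epsilon$ is small enough that this is positive, in which case $\V_1\V_1^T$ is invertible and $\|(\V_1\V_1^T)^{-1}\|_2 \le (\lambda_{\min}(\V_2\V_2^T) - K(2+\epsilon)\epsilon)^{-1}$. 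Substituting the three bounds into the displayed resolvent inequality and absorbing the residual power of $K$ (obtained by loosely enlarging $\sqrt{K}$, which is all that the subsequent application requires) yields \eqref{ineq_lemma_A3}. There is no genuine obstacle here; the only points demanding care are the implicit smallness condition $\lambda_{\min}(\V_2\V_2^T) > K(2+\epsilon)\epsilon$ ensuring $\V_1\V_1^T$ stays positive definite, and keeping track of which norm is used at each step so that the stated powers of $K$ are not violated.
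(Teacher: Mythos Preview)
Your proposal is correct and follows essentially the same route as the paper: the resolvent identity $\bm{A}^{-1}-\bm{B}^{-1}=\bm{A}^{-1}(\bm{B}-\bm{A})\bm{B}^{-1}$, a direct Frobenius bound on $\V_1\V_1^T-\V_2\V_2^T$, and Weyl's inequality to control $\lambda_{\min}(\V_1\V_1^T)$. The one cosmetic difference is that the paper bounds $\|(\V_j\V_j^T)^{-1}\|_F\le\sqrt{K}\,\lambda_{\min}(\V_j\V_j^T)^{-1}$ for $j=1,2$, picking up a factor of $K$ from the two outer terms, whereas you use the sharper mixed inequality $\|\bm{A}^{-1}(\bm{B}-\bm{A})\bm{B}^{-1}\|_F\le\|\bm{A}^{-1}\|_2\,\|\bm{B}-\bm{A}\|_F\,\|\bm{B}^{-1}\|_2$ and only later throw away a power of $K$ to match the stated bound; your version is in fact tighter before that last step.
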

	\begin{proof}[Proof of Claim \ref{claim_A3}]
		First, 
		\begin{align*}
		&	\|\V_1\V_1^T - \V_2\V_2^T\|_F \leq \|\V_1(\V_1-\V_2)^T + (\V_1-\V_2)\V_2^T\|_F\\
			&\leq (\|\V_1\|_F+\|\V_2\|_F)\|\V_1-\V_2\|_F  \leq (\|\V_1-\V_2\|_F + 2\|\V_2\|_F)\|\V_1-\V_2\|_F\\
			&\leq (\sqrt{K}\epsilon + 2\sqrt{K}\epsilon)\sqrt{K}\epsilon = (2+\epsilon)K\epsilon \ . 
		\end{align*}
		Then we have
		\begin{align*}
				&\|(\V_1\V_1^T)^{-1}-(\V_2\V_2^T)^{-1}\|_F \leq  \|(\V_1\V_1^T)^{-1}\|_F\|(\V_2\V_2^T)^{-1}\|_F\|\V_1\V_1^T - \V_2\V_2^T\|_F\\
			\leq &\sqrt{K} (\lambda_{\min}(\V_1\V_1^T))^{-1} \sqrt{K} (\lambda_{\min}(\V_2\V_2^T))^{-1}\|\V_1\V_1^T - \V_2\V_2^T\|_F\\
			\leq &K \left( \lambda_{\min}(\V_2\V_2^T) - \|\V_1\V_1^T - \V_2\V_2^T\|_F \right)^{-1}(\lambda_{\min}(\V_2\V_2^T))^{-1}\|\V_1\V_1^T - \V_2\V_2^T\|_F\\
			\leq &K(\lambda_{\min}(\V_2\V_2^T)-K(2+\epsilon)\epsilon)^{-1}(\lambda_{\min}(\V_2\V_2^T))^{-1}K(2+\epsilon)\epsilon
		\end{align*}
	\end{proof}
	
	We are now ready to bound $I_1+I_2+I_3$. For large enough $n$ such that $\max\{C_1, C_2\} \frac{K^{\frac{13}{10}}}{(n\alpha_n)^{\frac{1}{5}}} < \frac{1}{2}$ and $\frac{C_2 K^{\frac{9}{5}}}{(n\alpha_n)^{\frac{1}{5}}} < \min\{1, \frac{m_B}{6K}\}$,  with probability at least $1 - P_1(n, \alpha_n, K; r) - P_2(n, \alpha_n, K)$, we have:
	\begin{align}
		I_1 &\leq 2\sqrt{K}\left( 1+\frac{C_1K^{\frac{4}{5}}}{(n\alpha_n)^{\frac{1}{5}}} \right) \left( \sqrt{K} + \frac{C_2 K^{\frac{9}{5}}}{(n\alpha_n)^{\frac{1}{5}}} \right)\nonumber\\
		      &\cdot K^2\left( m_B-K\left( 2+ \frac{C_2 K^{\frac{9}{5}}}{(n\alpha_n)^{\frac{1}{5}}}\right) \frac{C_2 K^{\frac{9}{5}}}{(n\alpha_n)^{\frac{1}{5}}} \right)^{-1}\nonumber \cdot m_B^{-1} \left( 2+ \frac{C_2 K^{\frac{9}{5}}}{(n\alpha_n)^{\frac{1}{5}}}\right) \frac{C_2 K^{\frac{9}{5}}}{(n\alpha_n)^{\frac{1}{5}}}\nonumber\\
		&\leq 2\sqrt{K} \cdot \frac{3}{2} \cdot \frac{3}{2}\sqrt{K} \cdot K^2 \cdot \frac{2}{m_B^2} \cdot \frac{3C_2 K^{\frac{9}{5}}}{(n\alpha_n)^{\frac{1}{5}}} = \frac{C_{I_1} K^{\frac{43}{10}}}{(n\alpha_n)^{\frac{1}{5}}} \label{final_I_1}
	\end{align}
	where $C_{I_1}:=\frac{27 C_2}{m_B^2}$, and
	\begin{align}
		I_2 &\leq 2\sqrt{K}\left( 1+\frac{C_1 K^{\frac{4}{5}}}{(n\alpha_n)^{\frac{1}{5}}} \right)\cdot \frac{C_2 K^{\frac{9}{5}}}{(n\alpha_n)^{\frac{1}{5}}} \frac{\sqrt{K}}{m_B} \leq 2\sqrt{K} \cdot \frac{3}{2} \cdot \frac{C_2 K^{\frac{9}{5}}}{(n\alpha_n)^{\frac{1}{5}}} \frac{\sqrt{K}}{m_B} = \frac{C_{I_2}K^{\frac{14}{5}}}{(n\alpha_n)^{\frac{1}{5}}} \label{final_I_2}
	\end{align}
	where $C_{I_2}:=\frac{3 C_2}{m_B}$, and
	\begin{align}
		I_3 &\leq 2\sqrt{K} \cdot \frac{C_1 K^{\frac{4}{5}}}{(n\alpha_n)^{\frac{1}{5}}} \cdot \sqrt{K} \cdot \frac{\sqrt{K}}{m_B} = \frac{C_{I_3} K^{\frac{23}{10}}}{(n\alpha_n)^{\frac{1}{5}}}  \ . \label{final_I_3}
	\end{align}
	where $C_{I_3}:=\frac{2 C_1}{m_B}$.
	Plugging (\ref{final_I_1}), (\ref{final_I_2}) and (\ref{final_I_3}) back to (\ref{main_theorem_before_final}), we have
	\begin{align}
		\frac{\|\hat{Z}-Z\|_F}{\sqrt{n}} & \leq (C_{I_1}+C_{I_2}+C_{I_3})\frac{K^{\frac{43}{10}}}{(n\alpha_n)^{\frac{1}{5}}} \leq C_3 (n^{1-\alpha_0}\alpha_n)^{-\frac{1}{5}} \label{gives_K_growth_rate}
	\end{align}
	with probability at least $1 - P(n, \alpha_n, K; r)$, where $P(n, \alpha_n, K; r) := P_1(n, \alpha_n, K; r) \allowbreak - P_2(n, \alpha_n, K)$, and $C_3:=\left(C_{I_1} + C_{I_2} + C_{I_3}\right)\Big/\left(\sup_n K^{\frac{43}{10}}n^{-\alpha_0}\right)$. This completes the proof.
	
\end{proof}

\bibliographystyle{apalike}
\bibliography{main}

\end{document}